\theoremstyle{plain}
\newtheorem{theorem}{Theorem}[section]
\newtheorem{lemma}[theorem]{Lemma}
\theoremstyle{definition}
\newtheorem{definition}[theorem]{Definition}
\theoremstyle{remark}
\newtheorem{remark}[theorem]{Remark}
\title{Risk Estimation in a Markov Cost Process: \\Lower and Upper Bounds}
\author{
	Gugan Thoppe\\
	{\normalsize Indian Institute of Science} \\
	{\normalsize \texttt{gthoppe@iisc.ac.in}}
	\and 
	Prashanth L. A.\\
	{\normalsize Indian Institute of Technology Madras}\\
	{\normalsize \texttt{prashla@cse.iitm.ac.in}}
	\and
	Sanjay Bhat\\
	{\normalsize TCS Research, Hyderabad} \\
	{\normalsize \texttt{sanjay.bhat@tcs.com}}
}
\date{}
\begin{document}
	
	\maketitle

\begin{abstract}
We tackle the problem of estimating risk measures of the infinite-horizon discounted cost within a Markov cost process. The risk measures we study include variance, Value-at-Risk (\vart), and Conditional Value-at-Risk (\cvart).  First, we show that estimating any of these risk measures with $\epsilon$-accuracy, either in expected or high-probability sense, requires at least $\Omega(1/\epsilon^2)$ samples. Then, using a truncation scheme, we derive an upper bound for the CVaR and variance estimation. This bound matches our lower bound up to logarithmic factors. Finally, we discuss an extension of our estimation scheme that covers more general risk measures satisfying a certain continuity criterion, e.g., spectral risk measures, utility-based shortfall risk.  To the best of our knowledge, our work is the first to provide lower and upper bounds for estimating any risk measure beyond the mean within a Markovian setting. Our lower bounds also extend to the infinite-horizon discounted costs' mean. Even in that case, our lower bound  of $\Omega(1/\epsilon^2) $ improves upon the existing $\Omega(1/\epsilon)$ bound \cite{metelli2023tale}. 
\end{abstract}

\section{Introduction}
In a traditional discounted reinforcement learning (RL) problem \citep{BertsekasT96,sutton2018reinforcement}, the objective is to maximize the value function, which is the expected value of the infinite-horizon cumulative discounted reward. However, optimizing only the expected value is not appealing in several practical applications. For instance, in the financial domain, strategists like to consider the risk of investments. Similarly, in transportation, road users are sensitive to the variations in the delay incurred and would especially like to avoid a large delay even if it occurs infrequently. Risk-sensitive RL addresses such applications by incorporating a risk measure in the optimization process, either in the objective or as a constraint. 

Going beyond the expected value, three well-known risk measures are variance, Value-at-Risk (VaR) and Conditional Value-at-Risk (CVaR) \cite{rockafellar2000optimization}. For a Cumulative Distribution Function (CDF) $\cF,$ the VaR $v_\alpha(\cF)$ and CVaR $c_\alpha(\cF)$ at a given level $\alpha \in (0,1)$ is defined by
\begin{align}
	v_\alpha(\cF) &  = \inf \lbrace \xi : \prob{X \leq \xi} \geq \alpha \rbrace, \textrm{ and ~} \label{e:VaR.Defn} \\
	c_\alpha(\cF) & =  v_{\alpha}(X)  + \frac{\mathbb{E} \left[ X - v_{\alpha}(X) \right] ^+}{1 - \alpha} \label{eq:cvar-def},
\end{align}
where $X \sim \cF.$ From the above, it is apparent that $v_\alpha(\cF)$ is a certain quantile of the CDF $\cF.$ For a continuous distribution $\cF,$ $v_\alpha(\cF) = \cF^{-1}(\alpha)$,  while CVaR can be equivalently written as $c_\alpha(\cF) = \E[ X \mid X \ge v_\alpha(X)]$ for $X \sim \cF.$ In words, CVaR is the expected value of $X$, conditioned on the event that  $X$ exceeds the VaR. Choosing a $\alpha$ close to $1$ and taking $X$ as modeling the losses of a financial position, CVaR can be understood as the expected loss given that losses have exceeded a certain threshold (specified by a quantile).  In the financial domain, CVaR is preferred over VaR because CVaR is a coherent risk measure \cite{artzner1999coherent}, while VaR is not. In particular, with VaR as the risk measure, diversification can cause an increase in the risk as indicated by VaR -- an attribute that is not desirable in a financial risk measure.

With independent and identically distributed (i.i.d.) samples, the estimation of VaR and CVaR has received a lot of attention recently in the literature, cf. \cite{bhat2019concentration,brown2007large,wang2010deviation,thomas2019concentration,Kolla,prashanth2020concentration}. The concentration bounds for CVaR have been useful in deriving sample complexity results in the context of empirical risk minimization and bandit applications. 

In this paper, we are concerned with the problem of estimating a risk measure from a sample path of a discounted Markov Cost Process (MCP). In the context of RL, this is equivalent to the policy evaluation problem, albeit for a risk measure. For this problem, we derive minimax sample complexity lower bounds as well as an upper bound.  As a parallel to the i.i.d. sampling framework mentioned above, the upper bound we derive is useful in the design and analysis of risk-sensitive policy gradient algorithms (see \cite{prashanth2022risk} for a recent survey). 

\begin{table*}
	\centering
	\caption{Summary of the sample complexity lower and upper bounds, in an expected sense,  for estimating various risk measures.  
    For a given $\epsilon>0$,  sample complexity is the number of sample transitions $N$ such that the estimation error $\E|\hat \eta_n - \eta(D)|<\epsilon$ for all $n \geq N.$ Here $\eta$ is the risk measure, $D$ is the cumulative discounted cost, and $\hat \eta_N$ the risk estimate. Here $\tilde O\left(\cdot \right)$ is a variant of the big-O notation that ignores logarithmic factors.\\}
 \label{tab:summary}
 		\begin{tabular}{c|c|c|c}
			\hline
			\multirow{2}{*} {\textbf{Bound type}} &\multirow{2}{*}{\textbf{Risk measure}} & \multirow{2}{*}{\textbf{Sample complexity}} &  \multirow{2}{*}{\textbf{Reference}} \\ 
			&  & &  \\ \hline
    %
    \multirow{2}{*} {Lower bound} & \multirow{2}{*} {Mean, VaR, CVaR, variance} & \multirow{2}{*} {$\Omega \left(\frac{1}{\epsilon^2} \right)$} &  \multirow{2}{*} {Theorems~\ref{thm:lower.bound} and \ref{thm:Lower.Bd}}  \\
    &&&\\[-0.5ex]\hline
    
			\multirow{2}{*} {Upper bound} & \multirow{2}{*} {CVaR} & \multirow{2}{*} {$\widetilde{\mathcal{O}} \left(\frac{1}{\epsilon^2} \right)$} &  \multirow{2}{*} {Theorem \ref{thm:ub-cvar-expec}} \\ 
      &&& \\[-0.5ex]\hline
			\multirow{2}{*} {Upper bound} & \multirow{2}{*} {Lipschitz risk measure}  & \multirow{2}{*} {$\widetilde{\mathcal{O}} \left(\frac{1}{\epsilon^2} \right)$} &  \multirow{2}{*} {Theorem \ref{thm:expec-bd-generalrisk}}  \\
   &&&\\[-0.5ex]\hline
   			\multirow{2}{*} {Upper bound} & \multirow{2}{*} {Variance} & \multirow{2}{*} {$\widetilde{\mathcal{O}} \left(\frac{1}{\epsilon^2} \right)$} &  \multirow{2}{*} {Theorem \ref{thm:ub-variance-conc}} \\ 
      &&&\\\hline
		\end{tabular}
\end{table*}

Our key contributions are summarized below, while Tables \ref{tab:summary} and \ref{tab:summary-prob} provide a summary of the bounds we have derived.
\begin{enumerate}
    \item \textbf{Lower bounds}: We derive a minimax sample complexity lower bound of $\Omega(1/\epsilon^2)$ for risk estimation in two types of MCP problem instances: one with deterministic costs and the other with stochastic costs. In either case, our bound is order optimal and the first of its kind for risk estimation. Our first bound applies to \vart~and \cvart~of the infinite-horizon discounted cost, while the second applies \emph{even} to its mean and variance.
    


    \item \textbf{Lower bound proof}: We obtain our two lower bounds via novel proof techniques. In the deterministic costs case, the bound is obtained by identifying a `hard' problem instance involving a two-state Markov chain with a cost function that suitably diverges as $\epsilon \to 0.$ In particular, our bound is obtained by solving a suitable constrained optimization problem; in \cite{metelli2023tale}, the analogous optimization problem for mean estimation is unconstrained. With stochastic costs, we show that our lower bound holds even when the cost mean is bounded, as a function of $\epsilon.$ The `hard' problem instance in this case involves a MCP with a single state and a Gaussian single-stage cost function.  
      
    \item \textbf{Upper bound}: Using a truncated horizon estimation scheme, we derive an upper bound of $\widetilde{\mathcal{O}} \left(\frac{1}{\epsilon^2} \right)$ for CVaR  and variance estimation. These bounds match our corresponding lower bounds up to logarithmic factors. 

    \item \textbf{Other risk measures}: Finally, we also propose an extension of the estimation scheme to cover risk measures that satisfy a certain Lipschitz continuity criterion. Prominent risk measures that are covered in this extension are spectral risk measures \cite{acerbi2002spectral} and utility-based shortfall risk \cite{follmer2002convex}.
\end{enumerate}

We now compare our contributions to \cite{metelli2023tale}, which is the closest related work. In the aforementioned reference, the authors derive a minimax sample-complexity lower bound of $\Omega\left(\frac{1}{\epsilon}\right)$ in a probabilistic sense for estimating the mean of the infinite-horizon discounted cost of an MCP; see row 1 of Table~\ref{tab:summary-prob}. Their proof involves a two-state Markov chain with $\{0,1\}$ rewards. In this setting, the mean of the cumulative discounted cost can be explicitly written as a function of the transition probabilities.
In contrast, the proofs of our lower bounds are more challenging owing to the lack of a closed form expression for the risk measures we consider.  Moreover, our lower bounds, when specialized to mean estimation, leads to an improvement in comparison to \cite{metelli2023tale}.

The rest of the paper is organized as follows:
Section \ref{sec:pb} provides the problem formulation. Section \ref{sec:lb} presents two different lower bounds for estimation of risk measures that include VaR, CVaR, and variance. Section \ref{sec:ub} presents the upper bounds for estimation of risk measures that include CVaR, variance and a general class of Lipschitz risk measures. The lower and upper bounds include two variants, namely bounds which hold `in expectation' as well as bounds that hold `with high probability'.
Section \ref{sec:pf_sketch} provides the proofs.
Finally, Section \ref{sec:conclusions} provides the concluding remarks. 

\begin{table*}
	\centering
	\caption{Summary and comparison of sample complexity lower and upper bounds, in a high-probability sense, for various risk measures. For a given $\epsilon > 0$ and $\delta \in (0, 1),$ sample complexity is the number of state transitions $N$ such that $\Pr\{|\hat{\eta}_N - \eta(D)| > \epsilon\} \leq \delta.$ Here $\epsilon$ denotes the estimation accuracy, $\delta$ is the confidence, $\eta$ the risk measure, $D$ is the cumulative discounted cost, and  $\hat \eta_N$ the risk estimate using $N$ sample transitions of the MCP. In the bounds below, $T$ is the truncation parameter used in our risk estimator, $\gamma$ is the discount factor, and $K$ is an upper bound on the costs of the MCP. In the bounds, the constants $c, c'$ vary between rows. In the fourth row, $L$ denotes the Lipschitz constant for a risk measure (see \eqref{tonedef} below). Except the first row, other rows concern our work.\\}
 \label{tab:summary-prob}
		\begin{tabular}{c|c|c|c|c}
			\hline
			\textbf{Bound} &\textbf{Risk} & \multirow{2}{*}{\textbf{Tail bound}} & 
   \multirow{2}{*}{\textbf{Sample complexity}} & 
   \multirow{2}{*}{\textbf{Reference}} \\ 
	\textbf{type}		& \textbf{measure}  &   &\\ \hline
 			Lower  & \multirow{2}{*} {Mean} & \multirow{2}{*} {$\exp\left(-\frac{c N\epsilon}{1-\epsilon} \right)$} & \multirow{2}{*} {$\Omega \left(\epsilon^{-1} \ln\left(\frac{1}{\delta}\right) \right)$} & \multirow{2}{*} {\cite{metelli2023tale}}  \\
    bound &&&\\\hline
 			Lower  & Mean, VaR,  & \multirow{2}{*} {$\exp\left(-c\sqrt{N \epsilon^2} \right)$} & \multirow{2}{*} {$\Omega \left(\epsilon^{-2} \ln\left(\frac{1}{\delta}\right) \right)$} &\multirow{2}{*} {Theorems~\ref{thm:lower.bound}, \ref{thm:Lower.Bd}} \\ 
    bound &CVaR&&\\\hline
			Upper  & \multirow{2}{*}{VaR, CVaR} & \multirow{2}{*}{$\exp\left(- \frac{c N}{T} \left(\epsilon-\frac{\gamma^T K}{1-\gamma}\right)^2 \right)$} &  
   \multirow{2}{*} {$\widetilde{\mathcal{O}} \left(\epsilon^{-2}\ln (\frac{1}{\delta})\right)$}& \multirow{2}{*} {Theorems~\ref{thm:ub-cvar}, \ref{thm:ub-var}}\\
   bound&&&\\\hline
			Upper & Lipschitz    & \multirow{2}{*} {$\exp\left[- \frac{c N}{T} \left[\frac{1}{L}\left[\epsilon-\frac{\gamma^T K}{1-\gamma}\right]-\frac{c'\sqrt{T}}{\sqrt{N}}\right]^2 \right]$} &  
   \multirow{2}{*} {$\widetilde{\mathcal{O}} \left(\epsilon^{-2}\ln (\frac{1}{\delta})\right)$}& \multirow{2}{*} {Theorem \ref{thm:conc-bd-generalrisk}}  \\ 
   bound &risk measure&&\\\hline
			Upper  & \multirow{2}{*} {Variance} & \multirow{2}{*}{$\exp\left(- \frac{c N}{T} \left(\epsilon-\frac{2\gamma^T K^2}{(1-\gamma)^2}\right)^2 \right)$} & \multirow{2}{*} {$\widetilde{\mathcal{O}} \left(\epsilon^{-2}\ln (\frac{1}{\delta})\right)$}& \multirow{2}{*} {Theorem \ref{thm:ub-variance-conc}} \\ 
   bound&&&\\\hline
		\end{tabular}
\end{table*}

\section{Problem Formulation and Preliminaries}
\label{sec:pb}
We formally provide all our notations, describe our setup, and state our research goals here. We also give a detailed description of a general risk estimation algorithm.

\textbf{Notations}: $\cU$ denotes an arbitrary (possibly infinite) set, and $\fF$ a $\sigma$-algebra on $\cU$ containing all its singleton subsets. For $\cS \subseteq \cU,$ $\fF_\cS := \{A \cap \cS: A \in \fF\}$ is the induced $\sigma$-algebra on $\cS.$ Also, $\sP(\cS),$ $\sP(\bR),$ and $\sP(\{0, 1\})$ are the sets of probability measures on $(\cS, \fF_\cS),$ $(\bR, \cB(\bR)),$ and $(\{0, 1\}, 2^{\{0, 1\}}),$ respectively, where $\cB(\bR)$ is the Borel-$\sigma$-algebra on $\bR,$ while $2^{\{0, 1\}}$ is $\{0, 1\}$'s power set. Similarly, $\sB(\cS)$ denotes the set of $\cS \mapsto \sP(\bR)$ functions. 

The tuple $(M, f)$ denotes a Markov Cost Process (MCP). Here, $M \equiv (\cS, P, \nu)$ is a Markov chain on the state-space $\cS \subseteq \cU$ with transition kernel $P: \cS \to \sP(\cS)$ and initial state distribution $\nu \in \cP(\cS),$ while $f \in \sB(\cS)$ is a (possibly stochastic) cost function. Further, $P(\cdot|s)$ is the distribution mapped from $s$ under $P,$  and $(s_n)_{n \in \bZ_+}$ is a trajectory of $M.$ Also, $\gamma \in [0, 1)$ denotes a discount factor, and $\cF_t(M, f)$ the CDF of the cumulative discounted cost over the horizon $t,$ i.e.,
\begin{equation}
    \sum_{n = 0}^{t - 1} \gamma^n f_n, \text{ where } f_n \sim f(s_n).
    \label{eq:n-Horizon-disc-cost-rv}
\end{equation}

Finally,
\[
    \sM := \left\{(M, f): \cF_t(M, f) \text{ converges weakly to a CDF}\right\},
\]
and, for $(M, f) \in \sM,$  
\begin{equation}
    \label{eq:disc-cost-rv}
    \cF(M, f) := \lim_{t \to \infty} \cF_t(M, f).
\end{equation}
Also, for $(M, f) \in \sM,$ the expressions $\mu(M, f),$ $\V(M, f),$ $\var(M, f),$ and $\cvar(M, f),$ with $\alpha \in (0, 1),$ denote the mean, variance, \vart\ and \cvart\ (both at the confidence level $\alpha\in(0,1)$) of the CDF $\cF(M, f),$ respectively. 

\textbf{Setup}: We presume we have access to an MCP $(M, f) \in \sM,$ where $M$ and $f$ are unknown, but whose state and single-stage cost trajectory can be observed. We further presume that this MCP can be restarted from a state sampled from the initial state distribution as many times as we want.

%

We now describe a general risk estimation algorithm: one that can be used to estimate a desired risk measure $\eta(M, f).$

\begin{definition}[Risk Estimation Algorithm] A risk estimation algorithm $\sA$ is a tuple $(\rp, \h{\eta})$ made up of a reset policy $\rp$ and an estimator $\h{\eta}.$    
\end{definition} 

Loosely, $\rp$ specifies the rule to reset the given Markov chain $M$, i.e., stop its natural evolution under $P,$ and restart it from a state sampled from $\nu.$ On the other hand, $\h{\eta}$ describes how to transform the observations of states, resets, and single-stage costs into an estimate of the given risk measure. 

We now formally define the above two terms as in \cite{metelli2023tale}. For $n \in \bZ_+$ (the set of non-negative integers), let $\cH_n := (\cU \times \{0, 1\})^{n}$ be the set of all possible histories of states and reset decisions until time $n.$

\begin{definition}[Reset Policy]
    A reset policy $\rp \equiv (\rp_n)_{n \in \bZ_+}$ on $\cS$ is a sequence of functions where $\rp_n: \cH_{n} \times \cU  \mapsto \sP(\{0, 1\})$ maps $H \in \cH_{n}$ and $s \in \cU$ to a distribution $\rp_n(\cdot|H, s) \in \sP(\{0, 1\}).$ 
\end{definition}

\begin{definition}[Estimator]
    An estimator $\h{\eta} \equiv (\h{\eta}_n)_{n \in \bZ_+}$ is a sequence of functions where $\h{\eta}_n: \cH_n \times \sB(\cS) \mapsto \bR$ maps $H \in \cH_n$ and $f \in \sB(\cS)$ to a real number representing the estimate of $\eta(M, f).$ 
\end{definition}

Next, we describe the evolution dynamics of a Markov chain $M$ under the reset policy $\rp.$

\begin{definition}[Resetted Chain]
    Let $M \equiv (\cS, P, \nu)$ be a Markov chain, and $\rp$ a reset policy. Then, the corresponding resetted chain $\Mr$ is a stochastic process $(s_n, Y_n)_{n \in \bZ_+}$ taking values in $\cS \times \{0, 1\}$ such that
    
    \vspace{-1.5ex}

    \begin{enumerate}
        \item the initial state $s_0$ is sampled from $\nu;$

        \item for all $n \in \bZ_+,$  the reset decision $Y_n \in \{0, 1\}$ is drawn from  $r_n(\cdot|H_{n}, s_n)$ and the subsequent state $s_{n + 1}$ from $Y_n \nu + (1 - Y_n) P(\cdot|s_n),$  where $H_{n} := (s_0, Y_0, \ldots, s_{n - 1}, Y_{n - 1}) \in \cH_{n};$
    \end{enumerate}
    and these samples are drawn with independent randomness. The joint distribution of $H_n$ is denoted by $P^n_{M, \rp}.$    
\end{definition}

\begin{remark}
The sequence $(s_n)_{n \in \bZ_+}$ in $\Mr$ satisfies 
    \begin{align}
    \label{e:Mr.dynamics}
        \Pr(s_{n + 1} \in \cB|H_{n}, s_n)  = \rp_n(\{1\}|H_{n}, s_n) \nu(\cB) + \rp_n(\{ 0\}|H_n, s_n) P(\cB| s')
    \end{align}
    for any $\cB \in \fF_\cS.$
    As discussed in \cite{metelli2023tale}, this process is non-Markovian and non-stationary when the reset distribution $\rp_n$ depends on the history  $H_{n}.$
\end{remark}

\textbf{Research Goals}: Obtain lower and upper bounds on the samples needed to obtain an $\epsilon$-accurate estimate of a risk measure $\eta(M, f)$ related to the unknown underlying MCP $(M, f).$ Formally, our goal is to first obtain bounds on 
\begin{equation}
\label{e:exp.risk.defn}
    \inf_{\sA \equiv (\h{\eta}, \rp)} \hspace{0.5ex} \sup_{(M, f) \in \sM} \bE|\h{\eta}_n(H_n, f) - \eta(M, f)|
\end{equation}
and
\begin{equation}
\label{e:prob.risk.defn}
   \inf_{\sA} \sup_{(M, f)} \Pr\left\{|\h{\eta}_n(H_n, f) - \eta(M, f)| \geq \epsilon \right\}
\end{equation}
as a function of $n$ and $\epsilon,$ where $\bE$ and $\bP$ are with respect to $H_n \sim P^n_{M, \rp},$ and $\eta(M, f)$ is either $\mu(M, f), \V(M, f),$ or one of $\var(M, f)$ and $\cvar(M, f)$ for a given $\alpha \in (0, 1).$ We then aim to use these results to compute the desired sample complexity bounds.

\section{Lower Bounds: Risk Estimation Error}
\label{sec:lb}
We obtain risk estimation lower bounds for two different MCP problem instances: first, with deterministic costs and, second, with stochastic costs. These two cases are discussed in Subsections~\ref{s:det.rewards} and \ref{s:stoc.rewards}, respectively. 

\subsection{Deterministic Costs}
\label{s:det.rewards}
Throughout this subsection, we presume $f: \cS \to \bR,$ i.e., $f$ assigns a fixed real-valued cost to each state. 

\begin{theorem}[Minimax Lower Bound]
	\label{thm:lower.bound}
	For every MCP $(M, f) \in \sM,$ let the risk measure $\eta(M, f)$ be either $\cF(M, f)$'s \vart~ $\var(M, f)$ or \cvart~$\cvar(M, f)$ at a given $\alpha \in (0, 1).$  Then, for every $n \in \bN,$ error threshold $\epsilon > 0,$ and discount factor $\gamma \in [0, 1),$ 
	\begin{align}
		\label{e:prob.lower.bound}
		\inf_{\sA} \sup_{(M, f)} \bP\{|\h{\eta}(H_n, f) - \eta(M, f)| \geq \epsilon\} \geq \exp\left[-n\epsilon^2 \ln \left(\frac{1}{\alpha}\right) \ln \left(\frac{1}{\gamma}\right) 
		\right],
	\end{align}
	and
	\begin{align}
		\label{e:exp.lower.bound}
		\inf_{\sA} \sup_{(M, f)} \bE |\h{\eta}(H_n, f) - \eta(M, f)| \geq  \frac{1}{\sqrt{n}} \exp\left[-\ln \alpha \ln \gamma\right], 
	\end{align}
	where $\h{\eta}(H_n, f) \equiv \h{\eta}_n(H_n, f).$
\end{theorem}
\begin{proof}
	See Section \ref{sec:appendix-lb1-proof}.
\end{proof}
\begin{remark}
	\label{r:det.sample.complexity}
	By substituting $\epsilon = -\ln \delta/\sqrt{n \ln \alpha  \ln \gamma}$ in \eqref{e:prob.lower.bound} for a $\delta > 0,$ it follows that
	\[
	\inf_{\sA} \hspace{-0.25ex} \sup_{(M, f)}  \hspace{-0.25ex} \Pr\left\{|\h{\eta}(H_n, f) - \eta(M, f)| \geq \frac{-\ln \delta}{\sqrt{n \ln \alpha  \ln \gamma}} \right\}  \hspace{-0.25ex} \geq \delta.  \hspace{-1ex} 
	\]
	This implies that any \vart~or \cvart~estimation algorithm needs at least $\Omega(\epsilon^{-2})$ many samples to guarantee an $\epsilon$-accurate solution with probability $1 - \delta.$ Similarly, \eqref{e:exp.lower.bound} says that any algorithm would again need $\Omega(\epsilon^{-2})$ many samples to guarantee an $\epsilon$-accurate solution in an expected sense.
\end{remark}

\begin{remark}
	In Theorem~4.1 of \cite{metelli2023tale}, the authors claim a $O(1/\sqrt{n})$ minimax lower bound for estimating the mean of the infinite horizon discounted cost. However, this bound is misleading because the lower bound is derived assuming a certain quantity $\sigma^2_f$ is a constant. On closer inspection of their proof, it is apparent that $\sigma^2_f = \epsilon (1 - \epsilon),$ where $\epsilon$ is the estimation accuracy; see (34) in the Appendix there. When this $\epsilon$ dependence is factored in, it can be seen that the lower bound is only $O(1/n).$
\end{remark}

\subsection{Stochastic Costs}
\label{s:stoc.rewards}
In Section~\ref{s:det.rewards}, we looked at the case where the cost function $f$ was deterministic. However, to get the $\Omega(\epsilon^{-2})$ sample complexity, we require that the cost $f_1(A)$  increase suitably as $\epsilon$ decays to $0;$ see  \eqref{e:f1(A).choice}. In this subsection, we show that, by allowing the single-stage costs to be stochastic, we can obtain similar sample complexity lower bounds as in Theorem~\ref{thm:lower.bound} even when these costs have bounded mean. To derive these bounds, we use a radically novel proof idea to that of Theorem~\ref{thm:lower.bound} and also to the one used in \cite{metelli2023tale}. 

\begin{theorem}[\textbf{Lower Bound}]
\label{thm:Lower.Bd}
For every MCP $(M, f) \in \sM,$ let the risk measure $\eta(M, f)$ be either $\cF(M, f)$'s \vart~$\var(M, f)$ or \cvart\ $\cvar(M, f)$ at a given $\alpha \in (0, 1),$ or $\cF(M, f)$'s mean $\mu(M, f)$ or its variance $\V(M, f).$ Then, for every $n \in \bN$ and $\epsilon \in (0, 1/(2\sqrt{Kn}],$
\begin{align}
\label{e:high.prob.risk.Bd}
    \inf_{\sA} \sup_{(M, f)} \Pr\left\{|\h{\eta}(H_n) - \eta(M, f)| \geq \epsilon \right\} \geq \frac{1}{2}\exp\left(-2\sqrt{K n \epsilon^2} \right),
\end{align}
and
\begin{equation}
\label{e:exp.Bd}
    \inf_{\sA} \sup_{(M, f)} \bE|\h{\eta}(H_n) - \eta(M, f)| \geq \frac{1}{8 \sqrt{K n}},
\end{equation}
where $\h{\eta}(H_n) \equiv \h{\eta}_n(H_n)$ and $K = 6$ if $\eta(M, f) = \V(M, f)$ and $2$ otherwise.
\end{theorem}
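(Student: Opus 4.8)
The plan is to prove both bounds at once by Le Cam's two-point method applied to a deliberately degenerate pair of MCPs. Concretely, I would let $M$ have a single state $s_0$ (so $\cS=\{s_0\}$, $P(\cdot\mid s_0)=\delta_{s_0}$, $\nu=\delta_{s_0}$) and let the single-stage cost at $s_0$ be Gaussian, $f(s_0)\sim\mathcal{N}(m,\sigma^2)$, with $m$ and $\sigma$ chosen below. On this chain the state trajectory is constant, so $\sum_{t=0}^{\infty}\gamma^{t}f(s_t)=\sum_{t=0}^{\infty}\gamma^{t}X_t$ with $X_0,X_1,\dots$ i.i.d.\ $\mathcal{N}(m,\sigma^2)$ (independent fresh draws of the stochastic cost); this series converges almost surely, $\cF(M,f)=\mathcal{N}\big(m/(1-\gamma),\,\sigma^2/(1-\gamma^2)\big)$, and hence $(M,f)\in\sM$. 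Crucially, for any reset policy $\rp$ the state sequence in $\Mr$ is still constant, and each reset decision $Y_t$ is drawn from $\rp_t(\cdot\mid H_t,s_0,X_t)$ using randomness independent of $(M,f)$; therefore $H_n$ is a fixed (possibly randomized) function of $(X_0,\dots,X_{n-1})$, the same function for every value of $m$. Thus the reset machinery carries no information about the instance, and the problem collapses to the classical Gaussian-location model with $n$ i.i.d.\ observations.

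Next I would fix two instances sharing $\sigma$ but with cost means $m_0$ and $m_1$. For $\mathcal{N}(\mu,\tau^2)$, each of the mean, $\var$, and $\cvar$ (at level $\alpha$) equals $\mu+c_\eta\tau$ for a constant $c_\eta$ depending only on $\eta$ and $\alpha$ (namely $0$, $\Phi^{-1}(\alpha)$, and $\phi(\Phi^{-1}(\alpha))/(1-\alpha)$). Since both instances share $\tau=\sigma/\sqrt{1-\gamma^2}$, their target values $\eta(M_i,f_i)$ differ by exactly $|m_1-m_0|/(1-\gamma)$, whichever of the three functionals $\eta$ is. For the probabilistic bound I would set $|m_1-m_0|=2\epsilon(1-\gamma)$, so that $|\eta(M_0,f_0)-\eta(M_1,f_1)|=2\epsilon$; then any estimate $\h{\eta}(H_n)$ is $\epsilon$-far from at least one of the two targets, and the usual two-point reduction gives
\[
\inf_{\sA}\sup_{(M,f)}\Pr\{|\h{\eta}(H_n)-\eta(M,f)|\ge\epsilon\}\ \ge\ \frac{1}{2}\big(1-\mathrm{TV}(P^n_{M_0,\rp},P^n_{M_1,\rp})\big).
\]

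For the information-theoretic core, data processing together with the reduction above gives $\mathrm{TV}(P^n_{M_0,\rp},P^n_{M_1,\rp})\le\mathrm{TV}\big(\mathcal{N}(m_0,\sigma^2)^{\otimes n},\mathcal{N}(m_1,\sigma^2)^{\otimes n}\big)$, and for equal-variance Gaussians this last quantity equals $1-2\Phi\left(-\frac{\sqrt{n}\,|m_1-m_0|}{2\sigma}\right)$. Taking $\sigma$ equal to a suitable constant (taking $\sigma=1$, say) and using $|m_1-m_0|=2\epsilon(1-\gamma)\le 2\epsilon$ yields $1-\mathrm{TV}\ge 2\Phi(-\sqrt{n}\,\epsilon)$; under the hypothesis $\epsilon\le 1/\sqrt{8n}$ one has $\sqrt{n}\,\epsilon\le 1/(2\sqrt2)$, and on that range the elementary Gaussian-tail estimate $\Phi(-y)\ge\frac{1}{2}e^{-2\sqrt2\,y}$ (provable from $\Phi(0)=\frac12$ and an upper bound on the inverse Mills ratio) turns the display into $\Pr\{\cdot\}\ge\frac12\exp(-\sqrt{8n\epsilon^2})$; a cruder Bretagnolle--Huber bound $1-\mathrm{TV}\ge\frac12 e^{-\mathrm{KL}}$ with $\mathrm{KL}=\frac{n(m_1-m_0)^2}{2\sigma^2}$ gives the same rate with a poorer constant. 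For the expectation bound I would rerun the construction with risk gap $1/\sqrt{2n}$ (so $\delta:=|\eta_1-\eta_0|/2=1/\sqrt{8n}$) and combine Markov's inequality with the midpoint argument, $\sup_{(M,f)}\bE|\h{\eta}(H_n)-\eta(M,f)|\ge\frac{\delta}{2}\big(1-\mathrm{TV}\big)\ge\frac{\delta}{2}\cdot\frac12=\frac{1}{8\sqrt{2n}}$, the last inequality using $1-\mathrm{TV}\ge\frac12$ for the chosen constant $\sigma$; alternatively the expectation bound follows from the probabilistic one by taking $\epsilon=1/\sqrt{8n}$ and applying Markov.

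The step I expect to be the real crux is not any individual inequality but the modeling observation that a single-state Markov chain carrying a noisy (Gaussian) cost already saturates the $\Omega(1/\sqrt n)$ rate, and that on such a chain the whole reset-policy apparatus degenerates to plain i.i.d.\ sampling; once that is recognized, the remainder is the textbook two-point lower bound for estimating a Gaussian mean, and the leftover work is bookkeeping to match the stated constants (the choice of $\sigma$, and the use of $\epsilon\le 1/\sqrt{8n}$ to keep $\sqrt{n}\,\epsilon$ in the regime where the tail estimate $\Phi(-y)\ge\frac12 e^{-2\sqrt2\,y}$ is valid). A secondary point needing a little care is verifying that the gap between the two instances' target values equals the gap in cost means for $\var$ and $\cvar$ and not merely for the mean --- this is exactly what lets one construction cover all three functionals simultaneously.
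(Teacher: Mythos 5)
Your proposal is correct and takes essentially the same route as the paper --- a single-state Markov chain with a Gaussian single-stage cost, reduction to i.i.d.\ Gaussian location estimation (noting that the reset policy is uninformative and that the mean/VaR/CVaR gaps all coincide), and Le Cam's two-point method --- differing only in bookkeeping: you invoke the exact Gaussian total-variation formula plus a tail estimate where the paper recombines the $n$ costs into $2n$ samples of the limiting law via $aZ+bZ'$ and then applies Pinsker and the KL formula, and both routes deliver the stated constants. The one slip is your parenthetical ``alternatively, apply Markov to the probabilistic bound at $\epsilon=1/\sqrt{8n}$,'' which yields only $\frac{e^{-1}}{2\sqrt{8n}}<\frac{1}{8\sqrt{2n}}$; your primary argument via $1-\mathrm{TV}\geq \frac{1}{2}$ (equivalently, the paper's intermediate bound $\frac{1}{2}[1-\sqrt{2n\epsilon^2}]\geq\frac{1}{4}$ before exponentiating) is the step that actually closes the expectation bound.
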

\begin{proof}
See Section \ref{sec:appendix-lb2-proof}.
\end{proof}
\begin{remark}
By substituting $\epsilon = \ln[1/(2\delta)]/\sqrt{8n}$ in \eqref{e:high.prob.risk.Bd}, we have
\begin{align}
    \inf_{\sA} \hspace{-0.25ex} \sup_{(M, f)}  \hspace{-0.25ex} \Pr\left\{|\h{\eta}(H_n) - \eta(M, f)| \geq \frac{\ln [1/(2 \delta)]}{\sqrt{8n}}\right\}  \hspace{-0.25ex} \geq \delta.  \hspace{-1ex} \label{eq:lb-hpb-delta}
\end{align}
Hence, as in Remark~\ref{r:det.sample.complexity}, it follows that the sample complexity in the high-probability sense is $\Omega(\epsilon^{-2}).$ Similarly, the sample complexity in the expected sense is also $\Omega(\epsilon^{-2}).$
\end{remark}

\section{Upper Bounds for Risk Estimation Error}
\label{sec:ub}
\subsection{Upper Bound: VaR and CVaR}
\label{sec:ub-cvar}
In this section, we first present an estimation scheme for the VaR and CVaR of the cumulative discounted cost in an infinite-horizon Markov chain. Subsequently, we provide an estimation scheme for more general risk measures that satisfy a certain continuity criterion. 

We first present the classic estimators for VaR and CVaR of a random variable $X$ using $m$ i.i.d. samples \cite{serfling2009approximation}.
Let $\{X_1,\ldots,X_m\}$ denote the set of samples. Define the empirical distribution function (EDF) ${F}_m(\cdot)$ as follows:
\[
{F}_m (x) = \frac{1}{n} \sum_{i=1}^m \indic{X_i \leq x}, \forall
 x \in \mathbb{R}.
\]
Using EDF $F_m$, VaR and CVaR estimators, $\hat{v}_{m, \alpha}$ and $\hat{c}_{m, \alpha}$, respectively, are formed as follows:
\begin{equation}
\begin{split}
\hat{v}_{m, \alpha} & = F_m^{-1}(\alpha) = X_{\left[  \lceil m\alpha \rceil \right]}, \textrm{ and ~}\\
\hat{c}_{m, \alpha}  & = \frac{1}{m(1-\alpha)} \sum_{i=1}^m X_i \indic{X_i \geq \hat{v}_{m, \alpha}},    
\end{split}
\label{eq:var-cvar-estimate}
\end{equation}
where $X_{[i]}$ denotes the $i$th order statistic, for $i=1,\ldots,m$.

In the infinite-horizon discounted Markov chain setting that we consider in this paper, we estimate the VaR and CVaR of the cumulative discounted cost $D(f)$ \eqref{eq:disc-cost-rv} using a truncated estimator\footnote{For ease of notation, we drop the dependence on the cost function $f$, and use $D$ to denote the cumulative discounted cost random variable}. 
More precisely, given a budget of $N$ transitions, we reset after every $T$ steps to obtain $m=\left\lceil \frac{N}{T}\right\rceil$ trajectories. 
Since each trajectory provides an independent truncated sample of the cumulative cost $D$, we use the $m$ samples obtained from the distribution of $D$ to form the VaR and CVaR estimates using \eqref{eq:var-cvar-estimate}. More precisely,
let $D_1,\ldots,D_m$ denote the cumulative discounted cost samples obtained from the $m$ trajectories over the finite horizon $T$. With these samples, we form estimates $\hat v_N$ and $\hat c_N$ of VaR and CVaR of cumulative discounted cost r.v $D$.

\begin{theorem}[\textbf{Bound in expectation for CVaR}]
\label{thm:ub-cvar-expec}
	Assume $|f(s)|\le K$ for all $s\in \cS$. Let $\hat{c}_{N}$ denote the CVaR estimator formed using \eqref{eq:var-cvar-estimate} with a sample path of $N$ transitions and  truncation parameter $T$. Let $m=\left\lceil \frac{N}{T}\right\rceil$.  Then, we have
	\begin{align}
		&\E\left| \hat c_N - \cvar(D)\right| \le 
  \frac{32 (1-\gamma^T)^2 K^2}{(1-\alpha)(1-\gamma)^2 \sqrt{m}} + \frac{\gamma^T K}{1-\gamma}.
  \label{eq:expec-bd-cvar}
	\end{align}	
\end{theorem}
\begin{proof}
See Section \ref{sec:proof-ub-cvar-expec}.
\end{proof}
\begin{remark}
    Comparing the bound in expectation with the lower bound in \eqref{e:exp.Bd}, it is apparent that the bounds match in terms of the dependence on the length $n$ of the sample path, modulo an additional factor of $\frac{\gamma^T K}{1-\gamma}$ in the upper bound. The latter can be made small by choosing a larger truncation parameter $T$. 
\end{remark}

\begin{remark}\label{rem:4.3}
    For a given $\epsilon$, choose $T$ such that $\frac{\gamma^T K}{1-\gamma} < \frac{\epsilon}{2}$. Such a $T$ is $O\left(\log(1/\epsilon)\right)$.
    Separately, using the relation $1 - \gamma^T \leq 1,$ it can be seen that the first term on RHS of \eqref{eq:expec-bd-cvar} would fall below $\frac{\epsilon}{2}$ when $m = O\left(\frac{1}{\epsilon^2}\right)$. Now, since $N \leq m T,$ we have that the CVaR estimation scheme has a sample complexity $\tilde O\left(\frac{1}{\epsilon^2}\right)$, matching the corresponding result in the lower bound up to logarithmic factors. 
\end{remark}

The result below establishes an exponential concentration result for the CVaR estimate $\hat c_N$.
\begin{theorem}[\textbf{Concentration bound for CVaR}]
\label{thm:ub-cvar}
Under the assumptions of Theorem \ref{thm:ub-cvar-expec}, 	
for every $\epsilon>\frac{\gamma^T K}{1-\gamma}$, we have
\begin{equation}
 \label{eq:cvar-conc-bd}
 \begin{aligned}
		&\prob{| \hat c_N - \cvar(D)| > \epsilon} \le 
		6 \exp\left(- \frac{m (1-\alpha)(1-\gamma)^2}{11 (1-\gamma^{T})^2 K^2} \left(\epsilon-\frac{\gamma^T K}{1-\gamma}\right)^2\right),
\end{aligned}	
\end{equation}
where $m=\left\lceil \frac{N}{T}\right\rceil$.
\end{theorem}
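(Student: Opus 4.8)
The plan is to reuse the two-step decomposition from the proof of Theorem~\ref{thm:ub-cvar-expec}, but now carry a concentration inequality through the ``statistical'' term rather than an expectation bound. Write $D_T = \sum_{t=0}^{T-1}\gamma^t f(s_t)$ for the truncated-horizon cost and set $B := \gamma^T K/(1-\gamma)$. By coherence of CVaR applied to \eqref{eq:b1} (exactly as in \eqref{eq:b2}) we have the \emph{deterministic} truncation-bias bound $|\cvar(D_T) - \cvar(D)| \le B$. Hence, by the triangle inequality, $\{|\hat c_N - \cvar(D)| > \epsilon\} \subseteq \{|\hat c_N - \cvar(D_T)| > \epsilon - B\}$, and $\epsilon - B > 0$ precisely under the hypothesis $\epsilon > \gamma^T K/(1-\gamma)$. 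So it suffices to show
\[
\Pr\left\{|\hat c_N - \cvar(D_T)| > \epsilon - B\right\} \le 6\exp\left(-\frac{m(1-\alpha)}{11\, B^2}\,(\epsilon - B)^2\right),
\]
since substituting $B = \gamma^T K/(1-\gamma)$ then reproduces \eqref{eq:cvar-conc-bd}.

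For this reduced claim I would invoke an off-the-shelf concentration bound for the empirical CVaR estimator \eqref{eq:var-cvar-estimate} formed from $m$ i.i.d.\ samples of a bounded random variable. The $m$ trajectories produced by resetting every $T$ steps are mutually independent, and each truncated cost satisfies $|D_T| \le B$ almost surely; thus the light-tailed CVaR concentration bound of \cite{prashanth2019concentration} (see also \cite{bhat2019concentration,Kolla}) applies directly with confidence level $\alpha$ and range $B$. Its proof controls $|\hat c_{m,\alpha} - \cvar(X)|$ through (i) the deviation of the empirical quantile $\hat v_{m,\alpha}$ from $\var(X)$ and (ii) the deviation of the empirical tail average around its mean, each handled by a Hoeffding-type estimate for $[-B,B]$-bounded summands; the constant $6$ is the cost of a union bound over these (one- and two-sided) pieces, while the factor $(1-\alpha)$ and the absolute constant $11$ in the exponent come from the $1/(1-\alpha)$ scaling in \eqref{eq:cvar-def} and from consolidating the individual exponents into one.

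The main difficulty is bookkeeping rather than conceptual. First, I must ensure the cited i.i.d.\ CVaR concentration result is available for \emph{signed} $[-B,B]$-valued variables with exactly the stated dependence on $\alpha$ and $B$; if the reference is phrased for $[0,1]$- or $[0,B]$-valued inputs, a short shift-and-rescale argument using translation invariance, $\cvar(X+a)=\cvar(X)+a$, fixes this, and one must then check that the shift inflates the effective range by at most a constant factor so that the constant $11$ still absorbs it. Second, there is the harmless $m=\lceil n/T\rceil$ versus $\lfloor n/T\rfloor$ point: a budget of $n$ transitions with resets every $T$ steps yields $\lfloor n/T\rfloor$ complete independent trajectories, so the bound is most naturally read with that count (or with a suitably padded last trajectory), a change that does not affect the form of \eqref{eq:cvar-conc-bd}. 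Apart from these, the only work is substituting $B=\gamma^T K/(1-\gamma)$ and comparing to the error threshold $\epsilon$, which is routine.
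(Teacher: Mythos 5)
Your proposal follows essentially the same route as the paper: the deterministic truncation-bias bound $|\cvar(D_T)-\cvar(D)|\le \gamma^T K/(1-\gamma)$ obtained from coherence of CVaR, the event inclusion $\{|\hat c_N - \cvar(D)|>\epsilon\}\subseteq\{|\hat c_N - \cvar(D_T)|>\epsilon - \gamma^T K/(1-\gamma)\}$, and then an off-the-shelf i.i.d.\ CVaR concentration bound for bounded random variables applied to the $m$ independent truncated trajectories with range $B=\gamma^T K/(1-\gamma)$. The only difference is the reference invoked for that last step (the paper uses Theorem~3.1 of \cite{gao2010adaptive} rather than \cite{prashanth2019concentration}), and your side remarks on signed supports and on $\lceil n/T\rceil$ versus $\lfloor n/T\rfloor$ are reasonable bookkeeping points that do not change the argument.
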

\begin{proof}
See Section \ref{sec:proof-sec:proof-ub-cvar}.
\end{proof}

\begin{remark}
    Suppose
    \[
        L := \frac{(1 - \alpha)(1 - \gamma)^2}{11 K^2}.
    \]
    Then, since $(1 - \gamma^T) \leq 1,$ the bound in \eqref{eq:cvar-conc-bd} leads to 
    \[
        \prob{| \hat c_N - \cvar(D)| > \epsilon} \le 6\exp\left(-mL\left(\epsilon - \frac{\gamma^TK}{1 - \gamma}\right)^2\right)
    \]
    Choosing $T,$ $m,$ and, hence, $N$ along the lines discussed in Remark~\ref{rem:4.3} shows that, for $N = \tilde{O}(\epsilon^{-2}),$ we have $\prob{| \hat c_N - \cvar(D)| > \epsilon} \le \delta.$
\end{remark}

\begin{remark}
    The tail bound in \eqref{eq:cvar-conc-bd} can be inverted to arrive at the following high-confidence form: given $\delta \in (0,1)$, with probability (w.p.) $1-\delta$, we have
   \begin{align}		
|\hat c_N - \cvar(D)|\le \frac{K}{1-\gamma} \left[ (1-\gamma^{T})\sqrt{\frac{ 11  \log(\frac{6}{\delta})}{m (1-\alpha)}}  + \gamma^T\right]. \label{eq:cvar-hpb-form}
\end{align} 
\end{remark}

\begin{remark}
    Comparing the high-confidence bound in \eqref{eq:cvar-hpb-form} with the lower bound in \eqref{eq:lb-hpb-delta}, it is apparent that the bounds match in terms of the dependence on the length $n$ of the sample path. However, in terms of dependence on $\delta$, it can be seen that there is a gap as the lower bound has a $\log \frac{1}{\delta}$ factor, while the upper bound has $\sqrt{\log \frac{1}{\delta}}$. We believe the lower bound can be improved to close this gap on $\delta$-dependence.
\end{remark}

The last result of this section is a concentration bound for the VaR estimate $\hat v_N$, which is obtained by using a truncation parameter $T$. 
In the i.i.d. case, establishing a tail bound for VaR is difficult for distributions that are flat around the VaR, and a minimum growth rate assumption is usually imposed on the underlying distribution to overcome this problem, cf. \cite{kolla2019concentration,prashanth2020concentration}. In the case of the truncated estimator $\hat v_N$, the concentration bound is arrived by first relating the VaR of the truncated random variable $D_T$ and discounted cumulative cost $D$, followed by an application of i.i.d. concentration bound for VaR of $D_T$. For this approach to work, we require a minimum growth assumption on the distribution of $D_T$, which is formalized below.

\textbf{(A1)} The r.v. $D_T$ is continuous with a density, say $f_T$ satisfying the following for some $\eta,\zeta>0$:
$f_T(x)>\ell$ for all $x\in \left[\var(D_T) - \frac{\zeta}{2}, \var(D_T)+\frac{\zeta}{2}\right]$.

\begin{theorem}[\textbf{Concentration bound for VaR}]
\label{thm:ub-var}
Under (A1) and the assumptions of Theorem \ref{thm:ub-cvar-expec}, 	
for every $\epsilon>\frac{\gamma^T K}{1-\gamma}$, we have
\begin{equation}
 \label{eq:var-conc-bd}
 \begin{aligned}
		&\prob{| \hat v_N - \var(D)| > \epsilon} \le
		2 \exp\left(- 2m \ell^2  \min\left\{\left(\epsilon-\frac{\gamma^T K}{1-\gamma}\right)^2,\zeta^2\right\}\right),
\end{aligned}	
\end{equation}
where $m=\left\lceil \frac{N}{T}\right\rceil$.
\end{theorem}
\begin{proof}
See Section \ref{sec:proof-sec:proof-ub-var}.
\end{proof}


\subsection{Upper Bound: Lipschitz Risk Measures}
\label{sec:ub_ext}
In this section, we extend the truncation-based estimation scheme presented earlier to cover risk measures that satisfy a continuity criterion, which is made precise below.
\begin{definition}
\label{def:lip-risk}
Let $(\cL,W_1)$ denote the metric space of distributions, with the 1-Wasserstein distance as the metric $W_{1}$. 
A risk measure $\eta(\cdot)$ is said to be Lipschitz-continuous if there exists $L>0$ such that, for any two distributions $F, G \in \cL$, the following holds:
\begin{align}
	\left| \eta(F) - \eta(G)\right| \le L W_1(F,G). \label{tonedef}
\end{align} 
\end{definition}
In \cite{la2022wasserstein}, the authors establish that optimized certainty equivalent (OCE) risk \cite{bental1986oce,bental2007oce} that includes CVaR, spectral risk measure \cite{acerbi2002spectral}, and utility-based shortfall risk \cite{follmer2002convex} belong to the class of Lipschitz risk measures, see Lemmas 12, 13 and 15 in \cite{la2022wasserstein}. 

As before, let $D$ denote the cumulative cost, and $D_1,\ldots, D_m$ denote the samples obtained from $m$ truncated trajectories. Let $\eta(D)$ be a Lipschitz risk measure. Using the samples with EDF $F_m$, we form the following estimate:
\begin{align}
    \hat\eta_N = \eta(F_m).\label{eq:rhon}
\end{align}
For the case of spectral risk measure and utility-based shortfall risk, the estimate defined above coincides with their classic estimators, cf. \cite{hu2018utility,la2022wasserstein}. Moreover, the CVaR estimator defined in \eqref{eq:var-cvar-estimate} is a special case of \eqref{eq:rhon}.

The result below provides a bound in expectation for the estimator \eqref{eq:rhon} of a Lipschitz risk measure. 
\begin{theorem}[\textbf{Bound in expectation}]
	\label{thm:expec-bd-generalrisk}
	Assume $|f(s)|\le K$ for all $s\in \cS$. 
	Let $\eta(\cdot)$ denote a Lipschitz risk measure, which satisfies the following properties: (i) $\eta(X+a)=\eta(X) + a$ for any $a\in \bR$; and (ii) $\eta(X) \le \eta(Y)$ if $X\le Y$. 
	Let $\eta_{N}$ denote the estimator \eqref{eq:rhon} formed using a sample path of $N$ transitions, with truncated horizon $T$.   
	Then,  we have
	\begin{align*}
		&\E\left| \hat\eta_N - \eta(D)\right|  \le \frac{32 L (1-\gamma^T)^2 K^2}{(1-\gamma)^2 \sqrt{m}} + \frac{\gamma^T K}{1-\gamma},
	\end{align*}	
	where $m=\left\lceil \frac{N}{T}\right\rceil$, and $\myexp$ is the Euler constant.
\end{theorem}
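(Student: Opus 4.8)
The plan is to mirror the two-step argument in the proof of Theorem~\ref{thm:ub-cvar-expec}, with the CVaR-specific concentration bound replaced by a Wasserstein argument built on the Lipschitz property \eqref{tonedef}. Write $D_T = \sum_{t=0}^{T-1}\gamma^t f(s_t)$ for the truncated discounted cost and let $F_H$ be its CDF. Since $|f(\cdot)|\le K$, we still have $|D - D_T|\le \gamma^T K/(1-\gamma)$ almost surely, and applying properties (i)--(ii) of $\rho$ exactly as in the derivation of \eqref{eq:b2} gives $|\rho(D_T)-\rho(D)|\le \gamma^T K/(1-\gamma)$. A triangle inequality then yields
\[
\E\left|\rho_N - \rho(D)\right| \le \E\left|\rho_N - \rho(F_H)\right| + \frac{\gamma^T K}{1-\gamma},
\]
so the work reduces to bounding the first term.

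For that term I would use Lipschitz continuity directly. The estimator is $\rho_N = \rho(F_m)$, where $F_m$ is the empirical CDF of the samples $D_1,\dots,D_m$ obtained from the $m=\lceil N/T\rceil$ truncated trajectories, which are i.i.d.\ with CDF $F_H$ because of the resets. Hence \eqref{tonedef} gives $\left|\rho_N - \rho(F_H)\right| \le L\, W_1(F_m,F_H)$, and therefore $\E\left|\rho_N - \rho(F_H)\right| \le L\,\E\left[W_1(F_m,F_H)\right]$.

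It remains to bound $\E[W_1(F_m,F_H)]$ using only $|D_T|\le B := \gamma^T K/(1-\gamma)$. I would write $W_1(F_m,F_H)=\int_{\bR}|F_m(x)-F_H(x)|\,dx$, note that the integrand vanishes outside $[-B,B]$, and apply the pointwise estimate $\E|F_m(x)-F_H(x)|\le \sqrt{F_H(x)(1-F_H(x))/m}\le 1/(2\sqrt m)$ (Jensen), giving $\E[W_1(F_m,F_H)] \le B/\sqrt m$; alternatively one can invoke the bounded-support empirical-Wasserstein estimate from \cite{la2022wasserstein}, which already carries the constant $32$. Combining the three displays yields $\E|\rho_N-\rho(D)| \le 32L\gamma^T K/((1-\gamma)\sqrt m) + \gamma^T K/(1-\gamma)$. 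As a consistency check, specializing to $\rho=\cvar$, where $L=1/(1-\alpha)$, recovers Theorem~\ref{thm:ub-cvar-expec}.

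The decomposition and the pointwise second-moment bound are routine; the only delicate point is reconciling the leading constant --- deciding whether to route the empirical-measure estimate through the same lemma used for CVaR (so that the factor $32$ is inherited verbatim) or to prove a sharper $W_1$ bound and then relax it --- while making sure the i.i.d.\ structure of the truncated samples matches the hypotheses of whichever bound is used. I expect this bookkeeping, not any conceptual difficulty, to be the main obstacle.
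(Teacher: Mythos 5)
Your proposal is correct and follows essentially the same route as the paper: truncate to $D_T$, use properties (i)--(ii) to get $|\rho(D_T)-\rho(D)|\le \gamma^T K/(1-\gamma)$, apply the triangle inequality, and bound the empirical-estimation term via the Lipschitz property applied to the i.i.d.\ truncated samples --- the paper simply black-boxes this last step by citing the lemma from \cite{la2022wasserstein} with constant $32$, which is the alternative you already mention. Your inline Wasserstein computation via $\E|F_m(x)-F_H(x)|\le 1/(2\sqrt m)$ is valid and in fact yields the sharper constant $L B/\sqrt m$, which only strengthens the stated bound.
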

\begin{proof}
Follows in a similar manner as the proof of Theorem \ref{thm:ub-cvar-expec}. The reader is referred to Section \ref{sec:appendix-ub-lipschitz-proof} for the details.
\end{proof}

The result below provides a concentration bound for the estimator $\hat\eta_N$ defined in\eqref{eq:rhon}.
\begin{theorem}[\textbf{Concentration bound}]
	\label{thm:conc-bd-generalrisk}
	Under conditions of Theorem \ref{thm:expec-bd-generalrisk},
for every $\epsilon$ such that 
 $ \frac{512K}{(1 - \gamma) \sqrt{m}}<\frac{1}{L}\left(\epsilon-\frac{\gamma^T K}{1-\gamma}\right)< \frac{512K}{(1-\gamma)\sqrt{m}}+16K\sqrt{\myexp}$, we have
	\begin{align*}
		&\prob{| \hat\eta_N - \eta(D)| > \epsilon} \le 2 \exp\left[- \frac{2m(1-\gamma)^2}{256 (1-\gamma^{T})^2 K^2\myexp}  \left[\frac{1}{L} \left[\epsilon-\frac{\gamma^T K}{1-\gamma}\right]-\frac{512K}{(1-\gamma)\sqrt{m}}\right]^2\right],
	\end{align*}	
where $m=\left\lceil \frac{N}{T}\right\rceil$, and $\myexp$ is the Euler constant.
\end{theorem}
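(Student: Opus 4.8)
The plan is to mirror the two-stage structure used in the proof of Theorem~\ref{thm:ub-cvar}: first reduce the infinite-horizon quantity $\rho(D)$ to the truncated quantity $\rho(D_T)$ using the deterministic sandwich bound, and then invoke an i.i.d.\ concentration inequality for the Lipschitz-risk estimator $\rho_m = \rho(F_m)$ applied to the $m$ independent truncated samples $D_1,\dots,D_m$, each bounded in absolute value by $\gamma^T K/(1-\gamma)$.

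First I would note that, exactly as in \eqref{eq:b3}, properties (i) and (ii) together with $|D - D_T| \le \gamma^T K/(1-\gamma)$ a.s.\ give $|\rho(D) - \rho(D_T)| \le \gamma^T K/(1-\gamma)$. Hence for any $\epsilon > \gamma^T K/(1-\gamma)$,
\begin{align*}
\prob{|\rho_N - \rho(D)| > \epsilon}
&\le \prob{|\rho_N - \rho(D_T)| > \epsilon - \tfrac{\gamma^T K}{1-\gamma}}.
\end{align*}
So the problem is reduced to a concentration bound for the empirical Lipschitz-risk estimator built from $m$ i.i.d.\ bounded samples.

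Next I would supply that i.i.d.\ concentration bound. Since $\rho$ is $L$-Lipschitz w.r.t.\ $W_1$, we have $|\rho(F_m) - \rho(F_{D_T})| \le L\, W_1(F_m, F_{D_T})$, so it suffices to control the deviation of the empirical $W_1$ distance $W_1(F_m, F_{D_T})$. For a real-valued random variable bounded in $[-B,B]$ with $B = \gamma^T K/(1-\gamma)$, one has $W_1(F_m, F_{D_T}) = \int |F_m(x) - F_{D_T}(x)|\,dx$, which is a function of the samples with bounded differences of order $B/m$; by McDiarmid's inequality it concentrates around its mean at rate $\exp(-c m t^2/B^2)$, and its mean is $O(B/\sqrt m)$ by the standard $L_1$-DKW / Dvoretzky--Kiefer--Wolfowitz-type estimate $\E\, W_1(F_m,F_{D_T}) \le 512 B/\sqrt m$ (the constant matching the one already used in Theorem~\ref{thm:expec-bd-generalrisk} and Lemma in Section~\ref{sec:ub_ext}, presumably imported from \cite{la2022wasserstein}). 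Combining, for $t$ in the admissible range where the mean term has been subtracted off, $\prob{L\, W_1(F_m, F_{D_T}) > t} \le 2\exp\!\big(-\tfrac{m}{256 B^2 \myexp}(t/L - 512 B/(L\sqrt m))^2\big)$ up to identifying $B = \gamma^T K/(1-\gamma)$; setting $t = \epsilon - \gamma^T K/(1-\gamma)$ and simplifying $B^2 = \gamma^{2T}K^2/(1-\gamma)^2$ yields exactly the stated bound, with the two-sided range on $\epsilon$ coming from requiring $t/L - 512B/\sqrt m > 0$ (lower constraint) and from the regime of validity of the underlying McDiarmid/sub-Gaussian estimate (upper constraint $t/L < 512 B/\sqrt m + 16 K\sqrt{\myexp}$, after clearing $1-\gamma$).

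The main obstacle is getting the constants and the exact form of the i.i.d.\ concentration inequality for $W_1(F_m, F_{D_T})$ right, and in particular tracking how the $\myexp$ factor and the $512$ and $256$ constants propagate — this is almost certainly lifted verbatim from a lemma in \cite{la2022wasserstein} (the Wasserstein-distance concentration counterpart to the expectation bound already cited), so the real work is citing and instantiating that lemma with $B = \gamma^T K/(1-\gamma)$ rather than proving anything new. A secondary point to be careful about is that the admissible $\epsilon$-interval must be nonempty and must sit strictly above $\gamma^T K/(1-\gamma)$ so that the truncation reduction step is legitimate; I would state this as part of the hypothesis exactly as written and not belabor it.
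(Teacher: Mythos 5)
Your proposal follows essentially the same route as the paper: the truncation sandwich $|\rho(D)-\rho(D_T)|\le \gamma^T K/(1-\gamma)$ to shift the threshold to $\epsilon-\gamma^T K/(1-\gamma)$, followed by an appeal to the i.i.d.\ Wasserstein-based concentration result for Lipschitz risk measures from \cite{la2022wasserstein} (the paper cites its Theorem~27 directly, invoking sub-Gaussianity with parameter $K/(1-\gamma)$, whereas you instantiate the bounded case with $B=\gamma^T K/(1-\gamma)$ and sketch the McDiarmid-plus-expected-$W_1$ argument behind it). Your instantiation is, if anything, more consistent with the $\gamma^{2T}$ factor appearing in the theorem statement, so the proposal is correct and matches the paper's approach.
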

\begin{proof}
Follows in a similar manner as the proof of Theorem \ref{thm:ub-cvar}. The reader is referred to Section \ref{sec:appendix-ub-lipschitz-proof-conc} for the details.
\end{proof}

\subsection{Upper Bound: Variance}
Recall the truncation-based scheme from Section \ref{sec:ub-cvar}, where $m=\left\lceil \frac{N}{T}\right\rceil$ independent trajectories were obtained with the truncation parameter $T$. From these trajectories, we obtain samples $D_1,\ldots,D_m$ of the truncated discounted cost $D_T= \sum_{t = 0}^{T-1} \gamma^t f(s_t)$. 
Using these samples, we form the estimate $\hat \V_N$ of the variance $\V(D)$ of the cumulative discounted cost as follows:  
\begin{align}
    \hat \V_N = \frac{1}{m-1} \sum_{i=1}^{m} \left(D_i - \bar \kappa_m\right)^2, \textrm{ where }\bar \kappa_m = \frac1{m}\sum_{i=1}^m D_i.\label{eq:variance-est}
\end{align}
Notice that variance is not a Lipschitz risk measures in the sense of Definition \ref{def:lip-risk}. Thus, the result in Theorem \ref{thm:conc-bd-generalrisk} does not apply for variance. However, using the same proof idea, we can infer a concentration bound for the variance estimator in \eqref{eq:variance-est}. This result is presented below, while the proof is available in Appendix \ref{sec:appendix-ub-variance-proof-conc}. 
\begin{theorem}
\label{thm:ub-variance-conc}
	Assume $|f(s)|\le K$ for all $s\in \cS$. Let $\hat \V_{N}$ denote the variance estimator formed using  \eqref{eq:variance-est}.
 Let $m=\left\lceil \frac{N}{T}\right\rceil$. Then, 
\begin{align}
    \E|\hat \V_N - V(D)| 
    & \le \frac{2(1-\gamma^T)^2 K^2}{\sqrt{m}(1-\gamma)^2} + \frac{4\gamma^T K^2}{(1-\gamma)^2}.
\end{align}
 In addition,
for every $\epsilon>\frac{4\gamma^T K^2}{(1-\gamma)^2}$, we have
\begin{equation}
 \label{eq:variance-conc-bd}
 \begin{aligned}
		&\prob{| \hat \V_N - \V(D)| > \epsilon} \le 
		2 \exp\left(- \frac{ m (1-\gamma)^4}{ 2(1-\gamma^{T})^4 K^4} \left(\epsilon-\frac{4\gamma^T K^2}{(1-\gamma)^2}\right)^2\right).
\end{aligned}	
\end{equation} 
\end{theorem}
\begin{proof}
	See Section \ref{sec:appendix-ub-variance-proof-conc}.
\end{proof}


\section{Proofs}
\label{sec:pf_sketch}
\subsection{Proof of Theorem~\ref{thm:lower.bound}}
\label{sec:appendix-lb1-proof}
\begin{proof}
	We first derive the two lower bounds for $\eta(M, f) = \var(M, f).$ We later show how a similar proof tactic works in the \cvart~case as well. 
	
	Our proof of the \vart~result involves three main steps: i) constructing a suitable two-state MCP, ii) deriving a lower bound on the minimax error in terms of the MCP parameters, and iii) tightening the lower bound by optimizing over the various choices for these parameters. 
	While steps i) and ii) above are similar to those employed for deriving lower bounds for the expected value objective in \cite{metelli2023tale}, step iii) involves significant deviations owing to an inequality constraint on the VaR (see \eqref{e:v.alpha.constraint} below). Such a constraint is not present in the case of the expected value, making solution of the corresponding optimization problem simpler. 
	
	We now provide the details of these three steps. 
	
	\begin{enumerate}
		\item \textbf{MCP construction}: Let $A$ and $B$ be two arbitrary but distinct elements of $\cU.$ The  MCP we construct is   
		\begin{equation}
			\label{e:worst.case.MC}
			(\Ms, \fs),
		\end{equation}
		where $\Ms \equiv (\cS_*, P_*, \nu_*)$ is the two-state Markov chain
		with $\cS_* = \{A, B\},$ the transition matrix 
		\[
		P_* = \begin{pmatrix}
			p & 1 - p \\
			p & 1 - p
		\end{pmatrix}
		\]
		for some $p \in [0, 1],$ and the initial state distribution 
		$\nu_* \equiv (q, 1 - q)$ for some $q \in [0, 1];$ this Markov chain is shown in Figure~\ref{fig:mc}. We let the single-stage cost function $\fs$ be either $f_0$ or $f_1,$ where $f_0(A) = f_0(B) = f_1(B) =  0,$ while 
		\begin{equation}
			\label{e:f1(A).choice}
			f_1(A) = 2\epsilon \exp\left(\frac{1}{\epsilon^2} \right).
		\end{equation}
		Clearly, for $\alpha \in [0, 1],$ $\var(\Ms, f_0) = 0,$ while $\var(\Ms, f_1) \geq 0.$ 
		
		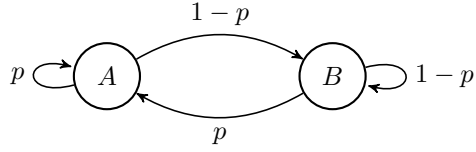
\begin{figure}[h]
			\centering
			\begin{tikzpicture}[->, >=stealth', auto, semithick, node distance=3cm]
				\tikzstyle{every state}=[fill=white,draw=black,thick,text=black,scale=1]
				\node[state]    (A)                     {$A$};
				\node[state]    (B)[right of=A]   {$B$};
				\path
				(A) edge[loop left]			node{$p $}	(A)
				edge[bend left,above]		node{$1 - p$}	(B)
				(B) edge[bend left,below]	node{$p$}	(A)
				edge[loop right]		node{$1-p$}	(B);
			\end{tikzpicture}
			\caption{A two state Markov chain}
			\label{fig:mc}
		\end{figure}
		
		\item \textbf{Lower bounding the minimax error}: Let $\sA \equiv (\eta, \rp)$ be any \vart~estimation algorithm, and $\Pr_*$ the probability distribution of $H_n$ under $P^n_{\Ms, \rp}.$ Then, for any $p$ and $q$ such that
		\begin{equation}
			\label{e:var.gap}
			\var(\Ms, f_1) = \var(\Ms, f_1) - \var(\Ms, f_0) \geq 2\epsilon,
		\end{equation} 
		we have
		\begin{align}
			\sup_{M,f}   \underset{H_n \sim P^n_{M, \rp}}{\Pr} &\left(|\h{\eta}(H_n, f)  - \var(M, f)| \ge \epsilon\right) \nonumber \\
			& \geq \max_{f \in \{f_0, f_1\}} \; \Pr_* \left(|\h{\eta}(H_n, f)  - \var(M, f)| \ge \epsilon\right) \\
			& \geq \frac{1}{2} \sum_{i = 0}^{1} \Pr_*\left(|\h{\eta}(H_n, f_i) - \var(\Ms, f_i)| \ge \epsilon\right) \label{e:max.bigger.than.average}\\
			& \ge \frac{1}{2}\Pr_*\big(\{|\h{\eta}(H_n, f_0) - \var(\Ms, f_0)| \geq \epsilon \}\nonumber \\
			& \hspace{4em} \cup \{|\h{\eta}(H_n, f_1) - \var(\Ms, f_1)| \geq \epsilon \} \big)\label{e:union.bound} \\
			&\ge \frac{1}{2}\Prs{\h{\eta}(H_n, f_0) =\h{\eta}(H_n, f_1)}\numberthis\label{eq:same.value.for.both.eestimators}\\
			& \ge \frac{1}{2}\Prs{s_t = B~\forall t \in \{0, 1, \ldots, n-1\} } \label{e:Special.Case.Undistinguishable.Estimators}\\
			& \ge \frac{1}{2} \nu_*(B) \min\{\nu_*(B), P_*(B|B)\}^{n - 1} \numberthis \label{e:L.Bd.reset.No.reset} \\
			& \ge \frac{1}{2} \min\{1 - q, 1 - p\}^n \label{e:resetted.chain.consequence.n}.
		\end{align}
		Above, \eqref{e:max.bigger.than.average} follows since $\max\{x, y\} \geq (x + y)/2$ for any $x, y \in \bR,$ while \eqref{e:union.bound} is due to an union bound. The relation in \eqref{eq:same.value.for.both.eestimators} holds because
		\[
		\{\h{\eta}(H_n, f_0) =\h{\eta}(H_n, f_1)\} \subseteq
		\left\{|\h{\eta}(H_n, f_0) - \var(\Ms, f_0)| \ge \epsilon\right\} 
		\cup 
		\left\{|\h{\eta}(H_n, f_1) - \var(\Ms, f_1)| \ge \epsilon\right\}
		\]
		which itself is implied by \eqref{e:var.gap}. Further, \eqref{e:Special.Case.Undistinguishable.Estimators} holds because the estimator $\h{\eta}$ cannot differentiate between $f_0$ and $f_1$ when $s_t = B$ $\forall t \in\{0, \ldots, n - 1\},$ while \eqref{e:L.Bd.reset.No.reset} holds due to \eqref{e:Mr.dynamics}. Finally, \eqref{e:resetted.chain.consequence.n} follows by using the fact that $\nu_*(B) = 1 - q$ and $P_*(B|B) = 1 - p.$

		\item \textbf{Tightening the bound}: Since \eqref{e:resetted.chain.consequence.n} is true for any $p$ and $q$ such that \eqref{e:var.gap} holds, we can now optimize over these values. This leads to following optimization problem: 
		\begin{equation}   
			\label{e:v.alpha.constraint}
			\begin{aligned}
				\max_{p, q} & \enspace \min\{1 - q, 1 - p\} \\
				\text{s.t. } &  \,0 \leq p \leq 1, \enspace
				0 \leq q \leq 1,  
				v_\alpha(\Ms, f_1) \geq 2 \epsilon 
			\end{aligned}
		\end{equation}
		Unlike the expected value case considered in \cite{metelli2023tale},  the optimal $p$ and $q$ cannot be inferred in closed form, and our proof involves a non-trivial argument using the VaR statistic to arrive values for $p$ and $q$ that suitably lower bound the max-min optimization problem  in \eqref{e:v.alpha.constraint} above.   We formalize this claim below. 
		
		\textbf{Claim}: Let $i := \left\lfloor \frac{1}{\epsilon^2 \ln \left(\frac{1}{\gamma}\right)} \right\rfloor.$  Then, $p = q = 1 - \alpha^{1/i}$ lies in the constraint set of \eqref{e:v.alpha.constraint}. 
		
		We now verify this claim. Clearly, the inequality $\var(\Ms, f_1) \geq 2 \epsilon$ is implied by
		\begin{equation}
			\label{e:VaR.alpha.Constraint}
			\alpha \geq \cF_1\left(2 \epsilon\right),
		\end{equation}
		where $\cF_1$ is a shorthand for the CDF $\cF(\Ms, f_1).$
		Therefore, to establish the claim, it suffices to show that \eqref{e:VaR.alpha.Constraint} holds for the given choice of $p$ and $q.$ 
		
		We have 
		\begin{align}
			\cF_1(2\epsilon) = {} & \lim_{t \to \infty} \Prob{\sum_{n = 0}^t \gamma^n f_1(s_n) \leq 2\epsilon} \nonumber \\
			\leq {} & \sup_{t \geq i - 1} \Prob{\sum_{n = 0}^t \gamma^n f_1(s_n) \leq 2\epsilon} \nonumber \\
			\leq {} & \Prob{s_n = B \enspace \forall n \in \{0, 1, \ldots, i - 1\}} \label{e:initial.i.states.B.implies.small.D(f)} \\
			\leq {} & \max\{1 - q, 1 - p\}^i \label{e:resetted.chain.consequence.i}
		\end{align}
		where \eqref{e:initial.i.states.B.implies.small.D(f)} holds because $s_n = A$ for even one $n \in \{0, \ldots, i - 1\}$ would imply that $\sum_{n = 0}^t \gamma^n f(s_n) > 2\epsilon$ for any $t \geq i - 1,$ which itself is implied by the fact that, $\forall n \in \{0, \ldots, i - 1\},$
		\[
		\gamma^n f_1(A) \geq \gamma^{i - 1} f_1(A) > \gamma^i f_1(A) \geq 2 \epsilon;
		\]
		while \eqref{e:resetted.chain.consequence.i} follows as in \eqref{e:resetted.chain.consequence.n}. Substituting the value of $p$ and $q$ from the claim in \eqref{e:resetted.chain.consequence.i} shows that $\cF_1(2\epsilon) \leq \alpha,$ as desired. 
		
		Using the claim, it now follows that the optimal value of \eqref{e:v.alpha.constraint} has the lower bound $\alpha^{1/i}.$ Combining this observation with \eqref{e:resetted.chain.consequence.n} and substituting the value of $i$ gives 
		\begin{equation}
			\sup_{M,f} \;  \underset{H_n \sim P^n_{M, \rp}}{\Pr}\left(|\h{\eta}(H_n, f) - \var(M, f)| \ge \epsilon\right)  \geq \alpha^{n/i} 
			\geq \exp\left[-n\epsilon^2 \ln \left(\frac{1}{\alpha}\right) \ln \left(\frac{1}{\gamma}\right) 
			\right].
		\end{equation}
	\end{enumerate}
	Since algorithm $\sA$ was arbitrary, \eqref{e:prob.lower.bound} follows. 
	
	We now derive \eqref{e:exp.lower.bound}. Clearly, for any random variable $X,$
	\[
	\bE|X| \geq \epsilon \Pr\{|X| \geq \epsilon\}.
	\]
	Hence, for any \vart~estimation algorithm $\sA,$
	\begin{align}
		\sup_{M, f} \underset{H_n \sim P^n_{M, \rp}}{\bE} &|\h{\eta}(H_n, f) - \var(M, f)| \nonumber \\ \geq {} & \epsilon \exp\left(-n\epsilon^2 \ln \left(\frac{1}{\alpha}\right) \ln \left(\frac{1}{\gamma}\right)\right) \nonumber \\
		\geq {} & \frac{e^{-\ln \alpha \ln \gamma}}{\sqrt{n}},
	\end{align}
	where the last relation follows by picking $\epsilon = n^{-1/2}.$ Again, since $\sA$ was arbitrary, \eqref{e:exp.lower.bound} follows. 
	
	It now remains to derive the lower bounds for the \cvart~case. The above proof works in more or less the same way, except for some minor modifications. In the \cvart~case, consider the optimization problem that is analogous to \eqref{e:v.alpha.constraint}. Due to \eqref{eq:cvar-def}, any $(p, q)$-pair that is feasible for \eqref{e:v.alpha.constraint} is also feasible for this new optimization problem. Hence, the \vart~lower bounds hold for \cvart~estimation as well. 
\end{proof}


\subsection{Proof of Theorem~\ref{thm:Lower.Bd}} 
\label{sec:appendix-lb2-proof}
\begin{proof}
We consider two variants of the problem instance given by the MCP $(M_0, f),$ where\\[0.5ex]
\begin{inparaenum}[\itshape 1)\upshape]
    \item the Markov chain is $M_0 \equiv (\cS, P, \nu)$ with $\cS$ being an arbitrary but fixed singleton subset of $\cU$ (say $\{s\}$), implying that $P$ and $\nu$ are trivial distributions; and 

    \item the single-stage cost function is
    \begin{equation}
    \label{e:f.distribution}
        f(s) \sim \cN((1 - \gamma)\mu, (1 - \gamma^2)\sigma^2),
    \end{equation}
    for some unknown $\mu \in \bR$ and $\sigma^2 > 0.$
\end{inparaenum}

In general, for any estimation algorithm $\sA \equiv (\h{\eta}, \rp),$ its estimate at time $n \in \bZ_+$ depends fully on the history $H_n \equiv (s_0, Y_0, \ldots, s_{n - 1}, Y_{n - 1})$ and the single-stage costs $f_0, \ldots, f_{n - 1},$ where $f_i \in f(s_i).$ However, in the case of $(M_0, f),$ the state-space $\cS$ underlying $M_0$ is trivial. Therefore, $s_0 = \cdots = s_{n - 1} = s.$ Similarly, whatever be the reset decision at any time $n,$ the algorithm gets to only see state $s$ and the associated random cost (which is independent of everything else) at $n$. Hence, the estimate $\h{\eta}(H_n, f)$ is essentially a function of $f_0, \ldots, f_{n - 1}$ i.e., 
\begin{equation}
\label{e:function.of.only.rewards}
    \h{\eta}(H_n, f) \equiv \h{\eta}(f_0, \ldots, f_{n - 1}).
\end{equation}
Separately, the structure of $(M_0, f)$ implies that $f_0, \ldots, f_{n - 1}$ are IID samples  with the distribution in \eqref{e:f.distribution}.

We break the rest of our proof into three parts. First, we show that the infinite-horizon cumulative discounted cost's CDF $\cF(M_0, f)$ (see \eqref{eq:disc-cost-rv}) exists for the above MCP and obtain expressions for its mean, variance, \vart, and \cvart. Next, we establish a relationship between $\cF(M_0, f)$ and the CDF of $f(s)$ given in \eqref{e:f.distribution}. Finally, we build upon the discussions in \cite{Duchi2023statistics} to obtain the stated lower bounds. 

Clearly, the $n$-horizon CDF $\cF_n(M_0, f),$ $n \in \bN,$ equals the CDF of $\sum_{t = 0}^{n - 1} \gamma^t X_t,$ where $(X_t)$ is an IID sequence of random variables having the same distribution as $f(s).$ However, the CDF of $\sum_{t = 0}^{n - 1} \gamma^t X_t$ is that of  $\cN((1 - \gamma^n) \mu, (1 - \gamma^{2n})\sigma^2).$  Further, its limit $\cF(M_0, f),$ as $n \to \infty,$ exists and is the CDF of $\cN(\mu, \sigma^2).$  Hence, 
\begin{equation}
\label{e:mean.VaR.CVaR.Formula}
    \begin{aligned}
        \mu(M_0, f) = {} & \mu, 
        \V(M_0, f) = {}  \sigma^2, \\
        \var(M_0, f) = {} & \mu + \sigma \Phi^{-1}(\alpha), \\
        \cvar(M_0, f) = {} & \mu + \sigma \frac{\phi (\Phi^{-1}(\alpha))}{1 - \alpha},
    \end{aligned}
\end{equation}
where $\phi$ and $\Phi$ are the PDF and CDF of the standard normal distribution, respectively. 

We next discuss a relationship between the samples of $\cF(M_0, f)$ and the distribution of $f(s).$ Let 
\begin{equation}
\label{e:a.b.defn}
\begin{aligned}
    a = {} & \frac{1 - \gamma + \sqrt{(1 - \gamma) (1 + 3 \gamma)}}{2}, 
    \text{ and } \,\,
    b = {}  \frac{1 - \gamma - \sqrt{(1 - \gamma) (1 + 3 \gamma)}}{2}.
\end{aligned}
\end{equation}
Then, for any two independent samples $Z$ and $Z'$ of $\cF(M_0, f),$ i.e., of $\cN(\mu, \sigma^2),$ the  sum 
\begin{equation}
\label{e:lc.combination.infinite.horizon.DF.samples}
    a Z + b Z' \sim \cN((1 - \gamma)\mu, (1 - \gamma^2) \sigma^2),
\end{equation}
i.e., it has the same distribution as $f(s);$ see \eqref{e:f.distribution}. 

Thus, any function $\h{\eta}$ that uses the $n$ single-stage costs $f_0, \ldots, f_{n - 1}$ to estimate $\eta(M, f)$ can be viewed as an hypothetical function $\h{\eta}'$ that works with $2 n$ IID samples $Z_0, \ldots, Z_{2n - 1}$ of the limiting distribution $\cF(M, f),$ i.e., of $\cN(\mu, \sigma^2),$ and adopts the following two steps: 
%
%

\begin{inparaenum}[\itshape 1)\upshape]
    \item combine every successive pair $Z_{2i}$ and $Z_{2i + 1},$ $i \in \{0, \ldots, n - 1\},$  to get $X_{i} = a Z_{2i} + b Z_{2i + 1},$ where $a$ and $b$ are as in \eqref{e:a.b.defn}; 

    \item use $\h{\eta}(X_0, \ldots, X_{n - 1})$ to get the final estimate.
\end{inparaenum}

In other words, 
\begin{equation}
\label{e:eta.etap.relation}
    \h{\eta}'(Z_0, \ldots, Z_{2n - 1}) = \h{\eta}(X_0, \ldots, X_{n - 1}).
\end{equation}

Finally, we derive our stated lower bound. The proofs for mean, \vart, and \cvart~are similar. Hence, we discuss them together now. Consider two different variants $(M_0, f^{+1})$ and $(M_0, f^{-1})$ of the MCP $(M_0, f),$ where $f^{\nu}(s),$ $\nu \in \{-1, +1\},$ is as in \eqref{e:f.distribution}, but with $\mu$ replaced by $\mu_{\nu} := \nu \epsilon$ for some $\epsilon > 0,$ and $\sigma^2 = 1.$ It follows from \eqref{e:mean.VaR.CVaR.Formula} that, for any of our quantities of  interest, i.e.,  mean, \vart, or \cvart,
\begin{equation}
    \eta(M_0, f^{+1}) - \eta(M_0, f^{-1}) = 2\epsilon.
\end{equation}

For $\nu \in \{-1, +1\},$ let $P_{\nu}$ denote the distribution $\cN(\mu_\nu, \sigma^2).$ Further, let $P^{k}_{\nu}$ denote the joint distribution of $Z_0, \ldots, Z_{k - 1},$ sampled independently from $\cN(\mu_{\nu}, \sigma^2).$ Also, let $\|\cdot\|_{\TV}$ denote the total variation distance and $D_{kl}$ the KL-divergence. Then, for every $n \in \bN$ and  $\epsilon \in (0, 1/\sqrt{8n}],$ we have
\begin{align}
     & \inf_{\sA} \sup_{M, f}
     \Pr\{|\h{\eta}(H_n, f)  - \eta(M, f)| \geq  \epsilon\} \label{e:high.prob.lower.Bd} \\
     \geq {} & \inf_{\sA} \hspace{-0.5ex} \sup_{\nu = -1, 1}  \Pr\{|\h{\eta}(H_n, f^\nu)  - \eta(M_0, f^\nu)| \geq  \epsilon\} \label{e:two.problem.instances} \\
     = {}  & \inf_{\h{\eta}} \hspace{-0.5ex} \sup_{\nu = -1, 1}
     \Pr\{|\h{\eta}(f^\nu_0, \ldots, f^{\nu}_{n - 1})  - \eta(M_0, f^\nu)| \hspace{-0.25ex} \geq \hspace{-0.5ex} \epsilon\} \hspace{-1ex} \label{e:alg.estimator.reln} \\
     \geq {} & \inf_{\h{\eta}'} \hspace{-1ex} \sup_{\nu = -1, 1} \hspace{-1ex} \Pr\{|\h{\eta}'(Z_0, \ldots, Z_{2n - 1})  \hspace{-0.25ex} - \hspace{-0.25ex}\eta(M_0, f^\nu)| \hspace{-0.25ex} \geq \hspace{-0.5ex} \epsilon\} \hspace{-1ex} \label{e:eta.eta.hyp.relation}\\
     = {} & \frac{1}{2} \left[1 -  \|P_{+1}^{2n} - P_{-1}^{2n}\|_{\TV} \right] \label{e:Lower.bound.TV.Variance}  \\
    \geq {} & \frac{1}{2} \left[1 -\sqrt{ \frac{1}{2} D_{\KL}(P^{2n}_{+1}, P^{2n}_{-1})}\right] \label{e:Pinskers.inequality}\\
    \geq {} & \frac{1}{2} \left[1 - \sqrt{n D_{\KL}(P_{+1}, P_{-1})}\right] \label{e:KL.Divergence.ind.samples} \\
    = {} & \frac{1}{2}[1 - \sqrt{2n \epsilon^2}] \label{e:KL.Divergence.Normal}\\
    \geq {} & \frac{1}{2} \exp\left(-\frac{\sqrt{2n \epsilon^2}}{1 - \sqrt{2n\epsilon^2}}\right) \label{e:1-x.lower.bound}\\
    \geq {} & \frac{1}{2} \exp\left(-2 \sqrt{2n \epsilon^2}\right), \label{e:eps.constraint}
\end{align}
which gives the relation in \eqref{e:high.prob.risk.Bd}, as desired. Above, \eqref{e:two.problem.instances} holds since we only consider two specific problem instances, \eqref{e:alg.estimator.reln} holds due to \eqref{e:function.of.only.rewards},  \eqref{e:eta.eta.hyp.relation} follows from \eqref{e:eta.etap.relation}, \eqref{e:Lower.bound.TV.Variance} follows from Eqs. (8.2.1) and (8.3.1) of \cite{Duchi2023statistics}, \eqref{e:Pinskers.inequality} follows from Pinsker's inequality, \eqref{e:KL.Divergence.ind.samples} follows from the independence of $Z_0, \ldots, Z_{2N - 1}$, \eqref{e:KL.Divergence.Normal} follows using the KL divergence formula for two Gaussian distributions, \eqref{e:1-x.lower.bound} follows since $2n\epsilon^2 \leq 1$ and $1 - x \geq e^{-x/(1 -x)}$ for every $x \leq 1$, while \eqref{e:eps.constraint} holds due to the constraint on $\epsilon.$ Also, note that the $\bP$ in \eqref{e:high.prob.lower.Bd} is with respect to $H_n \sim P^n_{M, \rp},$ while it is with respect to $P^n_{M_0, \rp}$ in 
\eqref{e:two.problem.instances}. Similarly, the $\bP$ in \eqref{e:eta.eta.hyp.relation} is with respect to $Z_0, \ldots, Z_{2n - 1}$ sampled in an IID fashion from $\cN(\mu_\nu, 1).$

We now prove~\eqref{e:exp.Bd}. Combining the discussions from Theorem~\ref{thm:lower.bound} and \eqref{e:KL.Divergence.Normal},  we have
\begin{equation}
\label{e:exp.Bd.int}
    \inf_{\sA} \sup_{(M, f)} \bE|\h{\eta}(H_n) - \eta(M, f)| \geq \frac{\epsilon}{2} [1 - \sqrt{2 n \epsilon^2}]
\end{equation}
for every $\epsilon \in (0, 1/\sqrt{2n}].$ By substituting $\epsilon = 1/\sqrt{8n},$ the desired relation in \eqref{e:exp.Bd} follows. 


We now prove the variance lower bound, i.e., the case where $\eta(M, f) = \V(M, f)$.
 We again consider two variants $(M_0, f^{-1})$ and $(M_0, f^{+1})$ of the MCP $(M_0, f),$ where $f^{\nu}(s),$ $\nu \in \{-1, +1\},$ is as in \eqref{eq:disc-cost-rv}, but with $\mu = 0$ and $\sigma^2$ replaced by $\sigma_{\nu}^2 := 1 + (1 + \nu) \epsilon$ for some $\epsilon > 0.$ We then have $\V(M_0, f^{+1}) - \V(M_0, f^{-0}) = 2\epsilon.$
	
	Arguing as in \eqref{e:high.prob.lower.Bd}--\eqref{e:KL.Divergence.ind.samples} we then have
	\begin{align}
		\inf_{\sA} \sup_{M, f} &
		\Pr\{|\h{\eta}(H_n, f)  - \eta(M, f)| \geq  \epsilon\} \\
		\geq {} & \frac{1}{2} \left[1 - \sqrt{n D_{\KL}(P_{+1}, P_{-1})}\right] \\
		= {} & \frac{1}{2} \left[1 - \sqrt{n (2\epsilon - \ln (1 + 2\epsilon) + 2\epsilon^2)}\right] \label{e:KL.Divergence.Normal.Variance}\\
		\geq {} & \frac{1}{2} \left[1 - \sqrt{6 n\epsilon^2}\right] \label{e:ln.bd},
	\end{align}
	where \eqref{e:KL.Divergence.Normal.Variance} follows from KL-divergence formula for Gaussian random variables, while \eqref{e:ln.bd} holds 
	since $2\epsilon - \ln(1 + 2\epsilon) \leq 4 \epsilon^2$ for any $\epsilon > 0.$ The rest of the proof follows as in \eqref{e:KL.Divergence.Normal}--\eqref{e:exp.Bd.int}, mutatis mutandis.
\end{proof}

\subsection{Proof of Theorem \ref{thm:ub-cvar-expec}} 
\label{sec:proof-ub-cvar-expec}

\begin{proof}
Let $D_T= \sum_{t = 0}^{T-1} \gamma^t f(s_t)$ denote the infinite-horizon discounted cost random variable for the Markov chain with truncated horizon $T$. Recall that $D= \sum_{t = 0}^\infty \gamma^t f(s_t)$ is the infinite horizon random variable. Let $F$ and $F_T$ denote the CDF of $D$ and $D_T$, respectively.
Then, using the fact that $|f(\cdot)|\le K$, we obtain
\begin{align}
	-\frac{\gamma^T K}{1-\gamma} \le D - D_T \le \frac{\gamma^T K}{1-\gamma}. \label{eq:b1}
\end{align}
It is well-known that CVaR is a coherent risk measure \cite{rockafellar2000optimization}, in particular, satisfying the following two properties for any two random variables $X$ and $Y$: 
(i) $\cvar(X+a)=\cvar(X) + a$ for every $a\in \bR$; and (ii) $\cvar(X) \le \cvar(Y)$ if $X\le Y$ almost surely. 

Using these properties in conjunction with \eqref{eq:b1}, we obtain
\begin{align}
	\cvar(D)-\frac{\gamma^T K}{1-\gamma} \le \cvar(D_T) \le \cvar(D) + \frac{\gamma^T K}{1-\gamma}. \label{eq:b2}
		\end{align}
Using the bound above, we have
\begin{align*}
&\E\left| \hat c_N - \cvar(D)\right| \\
&\le 
    \E\left| \hat c_N - \cvar(D_T)\right| + \E\left|\cvar(D_T) -  \cvar(D)\right|\numberthis\label{eq:a11}\\
&    \le \frac{32 (1-\gamma^T)^2 K^2}{(1-\alpha)(1-\gamma)^2 \sqrt{m}} + \frac{\gamma^T K}{1-\gamma},
\end{align*}
where the final inequality uses \eqref{eq:b2} to bound the second term in \eqref{eq:a11}, while the first term there is bounded using a special case of \cite[Corollary 20]{la2022wasserstein}, which is stated below for the sake of completeness.
\begin{lemma}\label{lemma:cvar-conc}
Let $X_1,\ldots,X_m$ be drawn i.i.d. from the distribution of a random variable $X$, satisfying $|X|\le B$ a.s. Let $\hat{c}_{m, \alpha}$ be formed using \eqref{eq:var-cvar-estimate}. Then,
\begin{align}
    \E\left| \hat{c}_{m, \alpha} - \cvar(X)\right| \le \frac{32B^2}{(1-\alpha) \sqrt{m}}.
\end{align}
\end{lemma}
The application of the result above to bound the first term in \eqref{eq:a11} is valid since the truncated trajectories are independent, and $|D_T| \le \frac{(1-\gamma^T) K}{1-\gamma}$ a.s. 
\end{proof}

\subsection{Proof of Theorem \ref{thm:ub-cvar}} 
\label{sec:proof-sec:proof-ub-cvar}
\begin{proof}
	The initial passage in the proof of Theorem \ref{thm:ub-cvar-expec} leading up to \eqref{eq:b2} holds. 
Using the inequalities in \eqref{eq:b2}, we arrive at the main claim as follows:
\begin{align*}
&\prob{| \hat c_N - \cvar(D)| > \epsilon} \\
&= \prob{| \hat c_N - \cvar(D_T) + \cvar(D_T) -  \cvar(D)| > \epsilon} \\
& \le \prob{| \hat c_N - \cvar(D_T)| > \epsilon - \frac{\gamma^T K}{1-\gamma}}\\
& \le 6 \exp\left(- \frac{m (1-\alpha)(1-\gamma)^2}{44 (1-\gamma^T)^2K^2} \left(\epsilon-\frac{\gamma^T K}{1-\gamma}\right)^2\right),
\end{align*}
where the penultimate inequality used \eqref{eq:b2}, while the final inequality follows from the CVaR concentration bound in Theorem 3.1 of \cite{wang2010deviation}. 
\end{proof}


\subsection{Proof of Theorem \ref{thm:ub-var}}
\label{sec:proof-sec:proof-ub-var}
\begin{proof}
As in the CVaR case, we have 
(i) $\var(X+a)=\var(X) + a$ for every $a\in \bR$; and (ii) $\var(X) \le \var(Y)$ if $X\le Y$ almost surely. 
Using these facts, we obtain 
\begin{align}
	\var(D)-\frac{\gamma^T K}{1-\gamma} \le \var(D_T) \le \var(D) + \frac{\gamma^T K}{1-\gamma}. \label{eq:b24}
		\end{align}
For the sake of completeness, we state a concentration bound for VaR in the i.i.d. case from \cite{prashanth2020concentration} below.
\begin{lemma}\label{lemma:var-conc}
Let $X_1,\ldots,X_m$ be drawn i.i.d. from the distribution of a random variable $X$, satisfying (A1). Let $\hat{v}_{m, \alpha}$ be formed using \eqref{eq:var-cvar-estimate}. Then,
For any $\epsilon > 0,$ we have
	\begin{align}
	\prob{\vert \hat{v}_{m, \alpha} - \var(X) \vert \geq \epsilon} \leq 2 \exp \left(  -2n\ell^2\min\left(\epsilon^2,\zeta^2\right)   \right), \label{eq:var-conc-bd1}
	\end{align}
	where $\ell,\zeta$ are specified in (A1).
\end{lemma}
The claim follows follows by using the inequality in \eqref{eq:b24} in conjunction with Lemma \ref{lemma:var-conc}.
\end{proof}

\subsection{Proof of Theorem \ref{thm:expec-bd-generalrisk}}
\label{sec:appendix-ub-lipschitz-proof}
\begin{proof}
	The initial passage in the proof of Theorem \ref{thm:ub-cvar-expec} leading up to \eqref{eq:b2} holds for a Lipschitz risk measure $\eta(\cdot)$ satisfying properties (i) and (ii) from the theorem statement. Thus,
	\begin{align}
		\eta(D)-\frac{\gamma^T K}{1-\gamma} \le \eta(D_T)
		\le \eta(D) + \frac{\gamma^T K}{1-\gamma}. \label{eq:b3}
	\end{align}
	Using the bound above, we have
	\begin{align*}
		\E\left| \hat\eta_N - \eta(D)\right| 
		&\le 
		\E\left| \hat\eta_N - \eta(D_T)\right| + \E\left|\eta(D_T) -  \eta(D)\right|\numberthis\label{eq:a112}\\
		&    \le \frac{32 L (1-\gamma^T)^2 K^2}{(1-\gamma)^2 \sqrt{m}} + \frac{\gamma^T K}{1-\gamma},
	\end{align*}
	where the final inequality uses \eqref{eq:b3} to bound the second term in \eqref{eq:a112}, while the first term there is bounded using a special case of the result from \cite[Theorem 19]{la2022wasserstein}, which is given below.
	\begin{lemma}
		Let $X_1,\ldots,X_m$ be drawn i.i.d. from the distribution of a random variable $X$, satisfying $|X|\le B$ a.s. Suppose $\eta(\cdot)$ is a Lipschitz risk measure with constant $L$. Let $\hat\eta_m=\eta(F_m)$, where $F_m$ is the EDF.  Then,
		\begin{align}
			\E\left| \hat\eta_m - \eta(X)\right| \le \frac{32L B^2}{ \sqrt{m}}.
		\end{align}
	\end{lemma}	
	The application of the result above to bound the first term in \eqref{eq:a112} is valid for reasons listed at the end of the proof of Theorem \ref{thm:ub-cvar-expec}.
\end{proof}

\subsection{Proof of Theorem \ref{thm:conc-bd-generalrisk}}
\label{sec:appendix-ub-lipschitz-proof-conc}
\begin{proof}
	The initial passage in the proof of Theorem \ref{thm:expec-bd-generalrisk} leading up to \eqref{eq:b3} holds here. Using the inequalities in \eqref{eq:b3}, we derive the main concentration result as follows:
	\begin{align*}
		\prob{| \hat\eta_N - \eta(D)| > \epsilon} 
		&= \prob{| \hat\eta_N - \eta(D_T) + \eta(D_T) -  \eta(D)| > \epsilon} \\
		& \le \prob{| \hat\eta_N - \eta(D_T)| > \epsilon - \frac{\gamma^T K}{1-\gamma}}\numberthis \label{eq:st12}\\
		& \le 2 \exp\left(- \frac{m(1-\gamma)^2}{256 (1-\gamma^{T})^2 K^2\myexp} \left[\frac{1}{L} \left[\epsilon-\frac{\gamma^T K}{1-\gamma}\right]-\frac{512K}{(1-\gamma)\sqrt{m}}\right]^2\right),
	\end{align*}
	where the penultimate inequality used \eqref{eq:b3}, while the final inequality follows by applying Theorem 27 in \cite{la2022wasserstein} after observing that $|D_T| \le \frac{(1-\gamma^T) K}{1-\gamma}$.
	For the sake of completeness, we state the aforementioned result from  \cite{la2022wasserstein} below.
	\begin{lemma}\label{lemma:var-conc}
		Let $X_1,\ldots,X_m$ be drawn i.i.d. from the distribution of a sub-Gaussian random variable $X$ with parameter $\sigma$. Suppose $\eta(\cdot)$ is a Lipschitz risk measure with constant $L$. Let $\hat\eta_m=\eta(F_m)$, where $F_m$ is the EDF. Then, for any $  \frac{512\sigma}{\sqrt{n}}<\left(\frac{\epsilon}{L}\right) < \frac{512\sigma}{\sqrt{n}}+16\sigma\sqrt{\myexp}$, we have 
		\begin{align}
			\prob{\vert \eta_{m} - \eta(X) \vert \geq \epsilon} \leq \exp\left(- \frac{m}{256\sigma^2\myexp} \left(\left(\frac{\epsilon}{L}\right)-\frac{512\sigma}{\sqrt{m}}\right)^2\right). \label{eq:var-conc-bd1}
		\end{align}
	\end{lemma}
The application of the result above to bound \eqref{eq:st12} is valid since $D_T$ is a bounded random variable, implying sub-Gaussianity.
\end{proof}

\subsection{Proof of Theorem \ref{thm:ub-variance-conc}}
\label{sec:appendix-ub-variance-proof-conc}
\begin{proof}
	Follows in a similar manner as the proof of Theorem \ref{thm:ub-cvar} after observing that \\[0.5ex]
	\centerline{$|\V(D)-\V(D_T)| \le \frac{4\gamma^T K^2}{(1-\gamma)^2}.$}
\end{proof}

\section{Conclusions}
\label{sec:conclusions}
We studied estimation of risk measures such as \vart, \cvart, mean and variance in an infinite-horizon discounted MCP.  We provided minimax lower bounds for estimating an $\epsilon$-accurate solution, both in a high-probability and expected sense.  We have also proposed a truncation-based estimator for the aforementioned risk measures, and obtained an upper bound on its sample complexity. 

\bibliography{references}

\begin{thebibliography}{21}
\providecommand{\natexlab}[1]{#1}
\providecommand{\url}[1]{\texttt{#1}}
\expandafter\ifx\csname urlstyle\endcsname\relax
  \providecommand{\doi}[1]{doi: #1}\else
  \providecommand{\doi}{doi: \begingroup \urlstyle{rm}\Url}\fi

\bibitem[Acerbi(2002)]{acerbi2002spectral}
C.~Acerbi.
\newblock Spectral measures of risk: A coherent representation of subjective
  risk aversion.
\newblock \emph{Journal of Banking \& Finance}, 26\penalty0 (7):\penalty0
  1505--1518, 2002.

\bibitem[Artzner et~al.(1999)Artzner, Delbaen, Eber, and
  Heath]{artzner1999coherent}
P.~Artzner, F.~Delbaen, J.~Eber, and D.~Heath.
\newblock {Coherent measures of risk}.
\newblock \emph{Mathematical Finance}, 9\penalty0 (3):\penalty0 203--228, 1999.

\bibitem[Ben-Tal and Teboulle(1986)]{bental1986oce}
A.~Ben-Tal and M.~Teboulle.
\newblock Expected utility, penalty functions, and duality in stochastic
  nonlinear programming.
\newblock \emph{Management Science}, 32\penalty0 (11):\penalty0 1445–1466,
  Nov. 1986.
\newblock ISSN 0025-1909.

\bibitem[Ben-Tal and Teboulle(2007)]{bental2007oce}
A.~Ben-Tal and M.~Teboulle.
\newblock An old-new concept of convex risk measures: The optimized certainty
  equivalent.
\newblock \emph{Mathematical Finance}, 17:\penalty0 449--476, 02 2007.

\bibitem[Bertsekas and Tsitsiklis(1996)]{BertsekasT96}
D.~P. Bertsekas and J.~N. Tsitsiklis.
\newblock \emph{Neuro-Dynamic Programming}.
\newblock Athena Scientific, 1996.

\bibitem[{Bhat} and Prashanth(2019)]{bhat2019concentration}
S.~P. {Bhat} and L.~A. Prashanth.
\newblock {Concentration of risk measures: A Wasserstein distance approach}.
\newblock In \emph{Advances in Neural Information Processing Systems}, pages
  11739--11748, 2019.

\bibitem[Brown(2007)]{brown2007large}
D.~B. Brown.
\newblock Large deviations bounds for estimating conditional value-at-risk.
\newblock \emph{Operations Research Letters}, 35\penalty0 (6):\penalty0
  722--730, 2007.

\bibitem[Duchi(2024)]{Duchi2023statistics}
J.~Duchi.
\newblock Lectures notes on statistics and information theory.
\newblock \url{https://web.stanford.edu/class/stats311/lecture-notes.pdf},
  2024.
\newblock Accessed: 2024-02-02.

\bibitem[F{\"o}llmer and Schied(2002)]{follmer2002convex}
H.~F{\"o}llmer and A.~Schied.
\newblock Convex measures of risk and trading constraints.
\newblock \emph{Finance and stochastics}, 6\penalty0 (4):\penalty0 429--447,
  2002.

\bibitem[Hu and Zhang(2018)]{hu2018utility}
Z.~Hu and D.~Zhang.
\newblock Utility-based shortfall risk: Efficient computations via monte carlo.
\newblock \emph{Naval Research Logistics (NRL)}, 65\penalty0 (5):\penalty0
  378--392, 2018.

\bibitem[Kolla et~al.(2019{\natexlab{a}})Kolla, Prashanth, Bhat, and
  Jagannathan]{Kolla}
R.~K. Kolla, L.~Prashanth, S.~P. Bhat, and K.~Jagannathan.
\newblock Concentration bounds for empirical conditional value-at-risk: The
  unbounded case.
\newblock \emph{Operations Research Letters}, 47\penalty0 (1):\penalty0 16 --
  20, 2019{\natexlab{a}}.

\bibitem[Kolla et~al.(2019{\natexlab{b}})Kolla, Prashanth, Bhat, and
  Jagannathan]{kolla2019concentration}
R.~K. Kolla, L.~A. Prashanth, S.~P. Bhat, and K.~P. Jagannathan.
\newblock Concentration bounds for empirical conditional value-at-risk: The
  unbounded case.
\newblock \emph{Operations Research Letters}, 47\penalty0 (1):\penalty0 16--20,
  2019{\natexlab{b}}.

\bibitem[Metelli et~al.(2023)Metelli, Mutti, and Restelli]{metelli2023tale}
A.~M. Metelli, M.~Mutti, and M.~Restelli.
\newblock A tale of sampling and estimation in discounted reinforcement
  learning.
\newblock In \emph{International Conference on Artificial Intelligence and
  Statistics}, pages 4575--4601. PMLR, 2023.

\bibitem[Prashanth and Bhat(2022)]{la2022wasserstein}
L.~Prashanth and S.~P. Bhat.
\newblock {A Wasserstein Distance Approach for Concentration of Empirical Risk
  Estimates}.
\newblock \emph{Journal of Machine Learning Research}, 23\penalty0
  (238):\penalty0 1--61, 2022.

\bibitem[Prashanth and Fu(2022)]{prashanth2022risk}
L.~A. Prashanth and M.~C. Fu.
\newblock Risk-sensitive reinforcement learning via policy gradient search.
\newblock \emph{Foundations and Trends{\textregistered} in Machine Learning},
  15\penalty0 (5):\penalty0 537--693, 2022.

\bibitem[Prashanth et~al.(2020)Prashanth, Jagannathan, and
  Kolla]{prashanth2020concentration}
L.~A. Prashanth, K.~Jagannathan, and R.~K. Kolla.
\newblock {Concentration bounds for CVaR estimation: The cases of light-tailed
  and heavy-tailed distributions}.
\newblock In \emph{{International Conference on Machine Learning}}, volume 119,
  pages 5577--5586, 2020.

\bibitem[Rockafellar and Uryasev(2000)]{rockafellar2000optimization}
R.~T. Rockafellar and S.~Uryasev.
\newblock {Optimization of conditional value-at-risk}.
\newblock \emph{Journal of Risk}, 2:\penalty0 21--42, 2000.

\bibitem[Serfling(2009)]{serfling2009approximation}
R.~J. Serfling.
\newblock \emph{Approximation theorems of mathematical statistics}, volume 162.
\newblock John Wiley \& Sons, 2009.

\bibitem[Sutton and Barto(2018)]{sutton2018reinforcement}
R.~S. Sutton and A.~G. Barto.
\newblock \emph{Reinforcement Learning: An Introduction}.
\newblock MIT Press, Cambridge, MA, 2nd ed. edition, 2018.

\bibitem[Thomas and Learned-Miller(2019)]{thomas2019concentration}
P.~Thomas and E.~Learned-Miller.
\newblock Concentration inequalities for conditional value at risk.
\newblock In \emph{International Conference on Machine Learning}, pages
  6225--6233, 2019.

\bibitem[Wang and Gao(2010)]{wang2010deviation}
Y.~Wang and F.~Gao.
\newblock Deviation inequalities for an estimator of the conditional
  value-at-risk.
\newblock \emph{Operations Research Letters}, 38\penalty0 (3):\penalty0
  236--239, 2010.

\end{thebibliography}

\end{document}